\documentclass[11pt]{article}

\usepackage[final]{acl}

\usepackage{times}
\usepackage{latexsym}

\usepackage[T1]{fontenc}

\usepackage[utf8]{inputenc}

\usepackage{microtype}

\usepackage{inconsolata}

\usepackage{graphicx}
\usepackage{tikz}
\usetikzlibrary{arrows.meta,calc,positioning}

\usepackage{enumitem}

\usepackage{amsmath}
\usepackage{amssymb}
\usepackage{amsthm}
\newtheorem{lemma}{Lemma}

\usepackage{booktabs}
\usepackage{multirow}
\usepackage{placeins}
\usepackage{colortbl}
\definecolor{conduithl}{gray}{0.92}

\definecolor{ConduitRed}{RGB}{190,48,48}
\definecolor{ConduitBlue}{RGB}{54,104,178}
\definecolor{ConduitGreen}{RGB}{66,145,88}
\definecolor{ConduitGray}{RGB}{105,105,105}
\definecolor{ConduitLight}{RGB}{238,238,238}
\definecolor{ConduitPanelBlue}{RGB}{145,203,254}

\usepackage{algorithm}
\usepackage{algpseudocode}

\definecolor{ProcBlue}{RGB}{0,92,175}

\algrenewcommand\algorithmicrequire{\textbf{Input:}}
\algrenewcommand\algorithmicensure{\textbf{Output:}}
\algrenewcommand\algorithmicreturn{\textbf{return}}

\newcommand{\stdpm}[1]{\textnormal{\scriptsize\,\textpm{}#1}}

\title{\textsc{Conduit}: A Unified Residual-Stream Restoration Framework for \\ KV Cache Reuse in Vision-Language Models}

\author{
  \textbf{Pengan Chen\textsuperscript{1,*}},
  \textbf{Kaisheng Zheng\textsuperscript{1,*}},
  \textbf{Liang Hong\textsuperscript{1}},
  \textbf{Lixia Yi\textsuperscript{2}},
  \textbf{Jiyue Jiang\textsuperscript{1}},
\\
  \textbf{Jiayang Chen\textsuperscript{1}},
  \textbf{Yixuan Wang\textsuperscript{1}},
  \textbf{Yimin Fan\textsuperscript{1}},
  \textbf{Xinyuan Liu\textsuperscript{1}},
\\
  \textbf{Jiayi Li\textsuperscript{1}},
  \textbf{Zhanqiu Zhang\textsuperscript{3,\textdagger}},
  \textbf{Yiwen Guo\textsuperscript{4,\textdagger}},
  \textbf{Yu Li\textsuperscript{1,\textdagger}}
\\
  \textsuperscript{1}The Chinese University of Hong Kong,
  \textsuperscript{2}Fudan University
\\
  \textsuperscript{3}LIGHTSPEED,
  \textsuperscript{4}Independent Researcher
}

\begin{document}
\maketitle
\begingroup
\renewcommand{\thefootnote}{\fnsymbol{footnote}}
\footnotetext[1]{Equal contribution.}
\footnotetext[2]{Co-corresponding authors.}
\endgroup

\begin{abstract}
Vision--language models (VLMs) often answer new questions about recurring visual content, where reusing the key--value (KV) cache can avoid re-encoding expensive visual prefixes. Exact-prefix reuse, however, fails when the same visual content appears under a changed prefix. Selective recomputation can recover quality under a small visual-token budget, but only when the right stale tokens are refreshed. Raw-attention selection can waste budget on high-attention tokens with small value-norm proxy scores and on query-irrelevant images. To address these failure modes, we propose \textsc{Conduit}, a training-free refresh policy that unifies single- and multi-image reuse as \emph{residual-stream restoration}. Building on norm-weighted attention, \textsc{Conduit} ranks cached visual tokens using cached-key query attention and an accessible pre-output cached-value-norm proxy, then applies empirical image-level relevance amplification before one global selection. With one image, the coefficient is one and the rule reduces to intra-image token selection. The method preserves model architecture and weights, adding only a single query-conditioned scoring pass at inference. At a $10\%$ refresh budget, \textsc{Conduit} achieves $97.0$--$99.5\%$ of the corresponding full-prefill five-dataset average across three VLM backbones and leads budgeted methods on average; on the MMLongBench-Doc latency subset, it uses $13.5\%$ of full-prefill FLOPs and achieves a $2.99\times$ time-to-first-token speedup.
\end{abstract}

\section{Introduction}
\label{sec:intro}

Vision--language models (VLMs) answer questions grounded in natural images, document pages, and user interfaces. In deployment, the same visual content often supports more than one query: retrieval-augmented generation over documents and screenshots \citep{yu2025visrag,tanaka2025vdocrag}, multi-turn visual dialogue \citep{das2017visualdialog}, multi-image reasoning \citep{jiang2024mantis,li2024llavanextinterleave}, and agentic loops \citep{koh2024visualwebarena,xie2024osworld} all revisit pages, slides, or screens as new questions arrive. Patch-based image encoders expand images into token sequences \citep{dosovitskiy2021vit}, and modern high-resolution VLM inputs can exceed a thousand visual tokens per image \citep{liu2024llavaonevision}; multi-image prompts can span tens of thousands of visual tokens, making repeated prefill expensive. Text-serving systems avoid this cost by caching the key--value (KV) tensors of an exact prefix and replaying them for a later request that shares it \citep{kwon2023pagedattention,zheng2024sglang}. Visual content, however, can recur under a changed prefix, leaving its cached state stale in the new causal context. We isolate this challenge with a two-request shifted-prefix protocol: the first request materializes visual K/V, and the second reuses the same visual content under a changed query prefix. Selective recomputation then keeps most cached entries and refreshes only a small subset under a fixed budget \citep{yao2024cacheblend}. The central question is not how much to refresh, but \emph{which} stale visual tokens to repair.

\begin{figure*}[t]
    \centering
    \includegraphics[width=\textwidth]{figure/figure1.pdf}
    \caption{\textbf{Overview of \textsc{Conduit}.} Visual-prefix changes make exact-prefix KV reuse invalid even when image content recurs. \textsc{Conduit} builds a norm-weighted token score from cached-key query attention and an accessible pre-output cached-value-norm proxy, applies empirical image-level relevance amplification, and recomputes only the highest-scoring visual tokens while reusing the rest.}
    \label{fig:teaser}
\end{figure*}

Raw attention is a natural selection signal, yet it is incomplete for visual cache repair. Prior work shows that high attention need not imply a large transformed contribution to the residual stream \citep{kobayashi2020attention,xiao2023streamingllm,kang2024visualattentionsink,bondarenko2023quantizable,sun2024massive}. Among strongly read tokens, cached value norm provides an accessible proxy for relative write scale. Multi-image prompts add an allocation problem: attention is normalized over a mixed visual prefix, so salient but question-irrelevant patches from distractor images can enter the global high-attention set. A raw-attention top-$k$ selector can therefore spend refresh budget on tokens that do not support the answer, leaving fewer slots for the image that contains answer-bearing evidence. Under shifted context, the cached visual states also do not reflect the current cross-image interactions \citep{srikrishnan2025blindsight,zhao2024mirb}. We therefore combine norm-aware token ranking with empirical image-level relevance amplification before the shared budget is spent.

To instantiate this formulation, we propose \textsc{Conduit}. We view transformer hidden states as a \emph{residual stream}: a shared representation channel that attention layers read from and write to \citep{elhage2021transformer}. Selective recomputation then becomes \emph{residual-stream restoration}---we refresh cached visual tokens whose stale states are most likely to perturb this channel under the current query. This view separates two quantities that raw attention conflates: how strongly the query reads a token and the scale at which that token may write. Building on the norm-weighted attention analysis of \citet{kobayashi2020attention}, \textsc{Conduit} combines cached-key query attention with cached value norm as an accessible pre-output write-scale proxy. It then applies an empirical per-image coefficient that amplifies scores for images read more strongly by the current query before one global top-$k$ selection. With one image, the coefficient is one and the rule reduces to intra-image token selection. Thus, single-image cache reuse and multi-image budget reallocation become two instances of the same refresh principle.

Operationally, \textsc{Conduit} is a training-free refresh policy that leaves model architecture and weights unchanged. At inference, one query-conditioned scoring pass composes the norm-aware token score with its image coefficient and refreshes the highest-scoring visual tokens (Figure~\ref{fig:teaser}). This design demotes high-attention tokens with small value-norm proxy scores within an image and reallocates refresh pressure toward query-relevant images across a multi-image prompt. Across single- and multi-image settings, \textsc{Conduit} preserves most of full-prefill quality with substantially less prefill compute. Our contributions are:

\begin{itemize}[leftmargin=*,topsep=0.25em,itemsep=0.15em,parsep=0pt,partopsep=0pt]
    \item \textbf{A unifying formulation} that casts visual KV-cache reuse as residual-stream restoration, combining within-image norm-aware ranking and cross-image budget reweighting, with single-image reuse as the one-image limit.
    \item \textbf{\textsc{Conduit}}, a training-free, architecture-preserving refresh policy that operationalizes norm-weighted attention for stale visual caches and adds empirical image-level relevance amplification before one global selection, with one query-conditioned scoring pass and no learned parameters or weight changes.
    \item \textbf{An empirical evaluation} across three backbones (Qwen2.5-VL-3B, Qwen2.5-VL-7B, and InternVL3-9B), five datasets in the main shifted-prefix evaluation (LongDocURL, MMLongBench-Doc, SlideVQA, InfoSeek, and ViQuAE), and four $K{=}1$ benchmarks (MMBench, OCRBench, POPE, and MMStar). At a $10\%$ refresh budget, \textsc{Conduit} achieves $97.0$--$99.5\%$ of the corresponding full-prefill five-dataset average and leads budgeted baselines on average; on the MMLongBench-Doc latency subset, it uses $13.5\%$ of full-prefill FLOPs and achieves a $2.99\times$ time-to-first-token speedup. Factor ablations show that both value-norm weighting and image-level amplification contribute empirical gains.
\end{itemize}

\section{Related Work}
\label{sec:related}

\subsection{KV Cache Compression}
\label{sec:related:kv}

The first wave of KV cache work tackled the memory footprint of long text contexts by retaining only a fraction of past entries. \citet{zhang2023h2o} identify a small set of \emph{heavy-hitter} tokens whose accumulated attention dominates the rest and retain them while evicting the tail; \citet{liu2023scissorhands} refine the scoring at decode time by predicting which tokens will continue to receive attention. \citet{li2024snapkv} use a recent observation window to identify important KV positions and pool nearby entries, and \citet{xiao2023streamingllm} additionally retain the first few positions because eviction at attention sinks destabilizes generation. Subsequent work loosens the assumption that every layer or head needs the same budget through pyramidal \citep{cai2024pyramidkv}, head-adaptive \citep{feng2024adakv}, retrieval-vs-streaming \citep{xiao2024duoattention}, and online-buffer \citep{oren2024tova} variants, with further refinements on scoring \citep{jo2025fastkv,tian2025keepkv} and quantization \citep{liu2024kivi}. Across this line, attention magnitude remains a common importance signal, and the cache is typically treated as one uniform sequence. Visual prefixes complicate both assumptions: raw attention alone does not expose the scale of a token's transformed contribution \citep{kobayashi2020attention}, and the prefix is segmented into images that compete for one shared refresh budget.

\subsection{Query-Driven Cache Reuse}
\label{sec:related:rag}

A second line targets reuse of a precomputed cache when the query is known but the cached context has shifted, motivated by retrieval-augmented generation. \citet{yao2024cacheblend} show that partial recomputation over cached chunks recovers answer quality at a fraction of full-prefill cost, and \citet{hu2024epic} formalize position-independent caching primitives that make such reuse practical. Subsequent work introduces lightweight query-driven scoring passes that decide which cached entries to refresh, via future-attention forecasts \citep{wang2026prophetkv} or query-centric, fusion-aware scoring \citep{yan2026qcfusequerycentriccachefusion}. These methods establish selective cache fusion for text contexts, but leave open how to transfer query-aware ranking to visual tokens and allocate one refresh budget across images.

\subsection{Multimodal KV and Visual Tokens}
\label{sec:related:mm}

Several recent systems compress or reuse the multimodal KV cache directly. \citet{zhao2025mpic} store per-image KVs on disk for cross-request reuse, and a related branch applies modality-aware or hybrid budgets \citep{wan2024lookm,pei2024csp,wan2025meda,zeng2026hybridkv}. A parallel branch studies inter-image attention itself: multi-image benchmarks expose the compositional demands of this setting \citep{zhao2024mirb,meng2024mmiu}, \citet{srikrishnan2025blindsight} find most layers carry little useful cross-image attention and apply template-aware sparse masks, and \citet{liu2026vica} and \citet{bohle2025casa} bypass self-attention over visual tokens through dedicated cross-attention pathways. A third branch reduces redundancy within a single image, first through vision-transformer token sparsification and reorganization \citep{rao2021dynamicvit,liang2022evit}, then through VLM-specific visual token pruning at prefill \citep{chen2024fastv,zhang2024sparsevlm,liu2024mustdrop} and patch fusion or instance-adaptive control \citep{cao2023pumer,cao2024madtp,guo2025crop,zhang2025adaptinfer,ma2026desap,zhao2026attentiondebiasing}. Together, these lines address multimodal cache compression, cross-image attention, and visual-token redundancy, but leave open their combination in stale-cache refresh: norm-aware token selection under one query-conditioned budget shared across images.

\subsection{Attention and the Residual Stream}
\label{sec:related:mech}

A separate line studies attention as a vector-valued contribution rather than a scalar importance weight. Attention-flow analyses show that raw weights alone do not determine information propagation \citep{abnar2020quantifying}. Most directly, \citet{kobayashi2020attention} measure a token's contribution by norm-weighted attention, $\alpha_t\|f(x_t)\|_2$, where $f$ includes the value and attention-output transformations. The residual-stream view provides the complementary systems interpretation: successive sublayers read from and write to a shared channel \citep{elhage2021transformer}, feed-forward layers write predictive features into it \citep{geva2021kv,geva2022promote}, and cross-modal information travels through the same channel in VLMs \citep{zhang2025crossmodalflow}. Mechanistic accounts further attribute attention sinks to no-op heads \citep{bondarenko2023quantizable} or fixed-coordinate massive activations \citep{sun2024massive,yang2025twosides}, explaining why high attention need not imply a large write. These analyses motivate separating how strongly the query reads a token from the scale at which that token can write. \textsc{Conduit} builds on this norm-weighted attention perspective, using an accessible pre-output cached-value-norm proxy for budgeted stale-cache refresh and pairing it with empirical cross-image relevance amplification and one global selector for both $K=1$ and $K>1$ inputs.

\section{Method}
\label{sec:method}

We frame visual KV-cache reuse as budgeted stale-cache refresh. Given cached visual K/V tensors and the current query context, we choose a small subset of visual tokens to recompute under the new prompt while reusing the rest---unselected tokens remain as potentially stale states rather than being evicted. \textsc{Conduit} ranks each state by how strongly the current query reads it and how much pre-output write scale it carries. It combines cached-key query attention with a practical pre-output value-norm proxy, then applies empirical image-level relevance amplification before one global top-$k$ selection (Figure~\ref{fig:refresh-mask}). The same rule produces the refresh mask for both single-image ($K=1$) and multi-image ($K>1$) inputs.

\begin{figure}[!t]
    \centering
    \includegraphics[width=\linewidth]{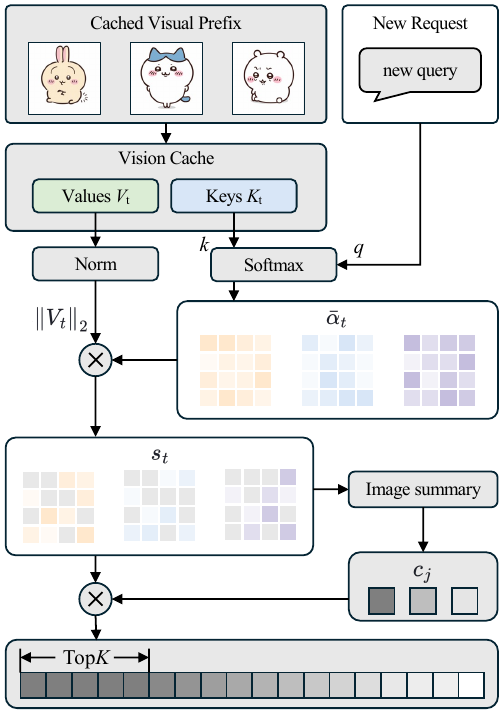}
    \caption{\textbf{\textsc{Conduit} refresh-mask construction.} Cached values provide a pre-output value-norm proxy, while cached keys and the new query estimate query attention. Their product gives each token's throughput score; a per-image coefficient reweights scores before a global top-$k$ selector chooses tokens to recompute.}
    \label{fig:refresh-mask}
\end{figure}

\subsection{Problem Setup}
\label{sec:method:setup}

Let a request contain $K$ images. Image $j$ contributes a visual-token set $\mathcal{V}_j$ with $n_j=|\mathcal{V}_j|$, and
\begin{equation}
    \mathcal{V} = \bigcup_{j=1}^{K}\mathcal{V}_j,
    \quad
    N_{\mathrm{vis}} = \sum_{j=1}^{K}n_j .
    \label{eq:visual-tokens}
\end{equation}
We write $\pi(t)$ for the image index of visual token $t$. Given a refresh ratio $r$, the visual-token refresh budget is
\begin{equation}
    k \;=\; \lfloor r N_{\mathrm{vis}}\rfloor .
    \label{eq:budget}
\end{equation}
The budget is defined only over visual tokens; new or edited text is processed normally, while the refresh mask determines which cached visual K/V entries are recomputed for the current request.

\subsection{Local Residual-Staleness Model}
\label{sec:method:residual}

We motivate the ranking rule with a local first-order model. Consider one decoder layer and one attention head at a query position $q$. Let $K_t^{\star}$ and $K_t^c$ denote the fresh and cached keys, and let $\widetilde V_t^{\star}$ and $\widetilde V_t^c$ denote the corresponding effective per-token value writes after the attention output map. Using the cached-key query attention $\alpha^c_{q,t}$ as the local readout signal from the current query to token $t$, the value-write component of the residual error after refreshing a set $\mathcal{T}\subseteq\mathcal{V}$ is
\begin{equation}
    \Delta h_q^{\mathrm{vis}}(\mathcal{T})
    \approx
    \sum_{t\notin\mathcal{T}} \alpha^c_{q,t}\,\xi_t ,
    \quad \xi_t = \widetilde V_t^{\star} - \widetilde V_t^c .
    \label{eq:residual-update}
\end{equation}
The full first-order difference decomposes as $h_q^{\star}-h_q^c=\sum_t\alpha^c_{q,t}\Delta\widetilde V_t+\sum_t\Delta\alpha_{q,t}\widetilde V_t^c+O(\Delta\alpha\,\Delta\widetilde V)$. The first term is the stale value write in Equation~\ref{eq:residual-update}; the second captures attention drift induced by fresh keys and by query-state changes propagated from earlier layers. Because evaluating attention drift and the higher-order interaction requires a fresh execution, the refresh ranking uses cached-key attention to model the leading value-write term.

To turn the vector error in Equation~\ref{eq:residual-update} into a token-wise ranking, we use the diagonal surrogate detailed in Appendix~\ref{app:surrogate}. We assume that stale displacements from different visual tokens are not consistently aligned and therefore drop their cross-token inner-product terms. After aggregating over a scoring query span $\mathcal{Q}$, the resulting square-root diagonal-risk ranking score for leaving token $t$ stale is
\begin{equation}
    o_t = \bar\alpha_t\,\|\xi_t\|_2 ,
    \quad
    \bar\alpha_t = \big(\mathbb{E}_{q\in\mathcal{Q}}[(\alpha^c_{q,t})^2]\big)^{1/2} ,
    \label{eq:oracle-score}
\end{equation}
where $\bar\alpha_t$ aggregates the cached-key attention of token $t$ over the scoring query span. The two factors of $o_t$ capture how strongly the current query reads token $t$ and how large the stale value displacement is. We cannot use $o_t$ directly, since observing $\xi_t$ would require first refreshing the token whose staleness it measures.

\subsection{Token-Level Throughput Score}
\label{sec:method:framework}

The product form builds on the norm-weighted-attention analysis of \citet{kobayashi2020attention}, which measures the transformed attention contribution including the attention output map. In cache refresh, the fresh write displacement $\|\xi_t\|_2$ is unavailable at selection time. \textsc{Conduit} replaces it with the accessible cached pre-output norm $\|V_t^c\|_2$ as a \emph{pre-output value-norm proxy}---a practical prior for relative write scale---giving the local surrogate score
\begin{equation}
    s_t^{\mathrm{sur}} = \bar\alpha_t\,\|V_t^c\|_2 .
    \label{eq:throughput-score}
\end{equation}
Under the relative-ranking assumption that larger cached value channels indicate larger potential stale writes, this surrogate favors tokens that are both strongly read and large in pre-output write scale. Our contribution is to operationalize this prior signal for budgeted stale-entry recomputation and extend the selection across images.

The local motivation above uses a single layer and head. In the deployed policy, we use a practical aggregation over the scoring query span, heads, and decoder layers (Figure~\ref{fig:refresh-mask}). For a scored layer $\ell\in\mathcal{L}$ and $H$ attention heads, the attention estimate is
\begin{equation}
    a_t^{(\ell)}
    =
    \frac{1}{|\mathcal{Q}|H}
    \sum_{q\in\mathcal{Q}}
    \sum_{h=1}^{H}\alpha^c_{q,t,\ell,h} ,
    \label{eq:head-mean}
\end{equation}
and the multi-layer throughput score is
\begin{equation}
    s_t
    =
    \frac{1}{|\mathcal{L}|}
    \sum_{\ell\in\mathcal{L}}
    a_t^{(\ell)}\,\|V_t^{c,(\ell)}\|_2 ,
    \label{eq:vnorm-multilayer}
\end{equation}
where $V_t^{c,(\ell)}$ is the concatenated multi-head cached value vector at layer $\ell$. By default $\mathcal{L}$ contains all decoder layers, so the score averages the query-conditioned attention--value signal across the full stack rather than relying on a hand-picked layer. Equation~\ref{eq:oracle-score} uses RMS attention from the squared-error surrogate, while Equation~\ref{eq:head-mean} uses the query- and head-mean available to the deployed policy. This practical aggregation is evaluated end to end.

\subsection{Image-Level Reweighting}
\label{sec:method:inter}

Token-level throughput ranks candidates within and across images, but multi-image prompts introduce competition among visually salient yet query-irrelevant images. Isolated token-level attention peaks can be noisy, whereas image-mean attention provides a coarser query-conditioned image readout. We use this summary for empirical image-level relevance amplification of the norm-aware token score. For image $j$, define
\begin{equation}
\begin{aligned}
    g_j &=
    \frac{1}{|\mathcal{L}|\,n_j}
    \sum_{\ell\in\mathcal{L}}
    \sum_{t\in\mathcal{V}_j}
    a_t^{(\ell)},
    \\
    \bar g &=
    \frac{1}{K}\sum_{j=1}^{K}g_j ,
\end{aligned}
    \label{eq:image-mean}
\end{equation}
the per-image mean cached-key attention and its cross-image mean. The image coefficient is
\begin{equation}
    c_j
    =
    (1-\lambda)
    +
    \lambda\,\frac{g_j}{\bar g},
    \quad
    \lambda\in[0,1],
    \label{eq:reweight-coef}
\end{equation}
    which interpolates between uniform scoring ($\lambda=0$) and attention-proportional amplification ($\lambda=1$). The coefficient uses $g_j$ because value norm is already present in the token score. Multiplying by $c_j$ raises tokens from images with above-average query readout in the shared global ranking while preserving an adaptive, quota-free allocation. The factor ablation and sensitivity results in Section~\ref{sec:exp:ablation} support this empirical design; Appendix~\ref{app:lagrangian-cj} gives its normalization and diagnostic behavior.

\begin{table*}[!t]
    \centering
    \small
    \setlength{\tabcolsep}{6pt}
    \renewcommand{\arraystretch}{1.15}
    \begin{tabular}{l l c c c c c @{\hskip 1em} c}
        \toprule
        & \textbf{Method} & \textbf{LongDocURL} & \textbf{MMLongBench-Doc} &\textbf{SlideVQA} & \textbf{InfoSeek} & \textbf{ViQuAE} & \textbf{Avg.} \\
        \midrule
        \multirow{7}{*}{\rotatebox[origin=c]{90}{Qwen2.5-VL-3B}}
          & Full prefill & 56.39\stdpm{0.42} & 34.19\stdpm{0.44} & 70.46\stdpm{0.42} & 44.36\stdpm{0.42} & 42.58\stdpm{0.44} & 49.60 \\
          & Cache reuse  & 47.93\stdpm{0.35} & 31.80\stdpm{0.45} & 64.59\stdpm{0.44} & 32.34\stdpm{0.40} & 34.88\stdpm{0.35} & 42.31 \\
        \cmidrule(l){2-8}
          & MPIC         & 48.16\stdpm{0.38} & 31.35\stdpm{0.40} & 64.31\stdpm{0.36} & 42.16\stdpm{0.40} & 38.00\stdpm{0.40} & 44.80 \\
          & CacheBlend   & 48.02\stdpm{0.42} & 31.18\stdpm{0.39} & 64.26\stdpm{0.39} & 32.60\stdpm{0.36} & 37.38\stdpm{0.40} & 42.69 \\
          & KVShare      & 48.10\stdpm{0.41} & 31.51\stdpm{0.37} & 64.43\stdpm{0.40} & 32.36\stdpm{0.43} & 36.33\stdpm{0.37} & 42.55 \\
          & ProphetKV    & 54.35\stdpm{0.40} & 32.25\stdpm{0.37} & 68.52\stdpm{0.44} & 42.30\stdpm{0.40} & 40.36\stdpm{0.38} & 47.56 \\
          & \textbf{\textsc{Conduit}} & \textbf{56.00}\stdpm{0.44} & \textbf{34.40}\stdpm{0.37} & \textbf{70.88}\stdpm{0.38} & \textbf{43.39}\stdpm{0.37} & \textbf{42.00}\stdpm{0.36} & \textbf{49.33} \\
        \midrule
        \multirow{7}{*}{\rotatebox[origin=c]{90}{Qwen2.5-VL-7B}}
          & Full prefill & 65.40\stdpm{0.39} & 53.43\stdpm{0.44} & 73.70\stdpm{0.39} & 54.54\stdpm{0.42} & 55.58\stdpm{0.41} & 60.53 \\
          & Cache reuse  & 56.59\stdpm{0.42} & 51.48\stdpm{0.40} & 68.83\stdpm{0.37} & 32.46\stdpm{0.39} & 45.76\stdpm{0.41} & 51.02 \\
        \cmidrule(l){2-8}
          & MPIC         & 55.93\stdpm{0.40} & 51.69\stdpm{0.39} & 68.82\stdpm{0.36} & 49.07\stdpm{0.39} & 50.73\stdpm{0.35} & 55.25 \\
          & CacheBlend   & 56.01\stdpm{0.38} & 51.23\stdpm{0.42} & 68.74\stdpm{0.36} & 32.32\stdpm{0.41} & 45.92\stdpm{0.38} & 50.84 \\
          & KVShare      & 55.35\stdpm{0.36} & 51.46\stdpm{0.43} & 67.94\stdpm{0.41} & 32.74\stdpm{0.41} & 45.78\stdpm{0.42} & 50.65 \\
          & ProphetKV    & 62.23\stdpm{0.44} & 50.91\stdpm{0.42} & 71.67\stdpm{0.41} & 47.75\stdpm{0.43} & 52.93\stdpm{0.42} & 57.10 \\
          & \textbf{\textsc{Conduit}} & \textbf{63.72}\stdpm{0.41} & \textbf{52.82}\stdpm{0.45} & \textbf{73.15}\stdpm{0.39} & \textbf{49.36}\stdpm{0.39} & \textbf{54.47}\stdpm{0.42} & \textbf{58.70} \\
        \midrule
        \multirow{7}{*}{\rotatebox[origin=c]{90}{InternVL3-9B}}
          & Full prefill & 57.37\stdpm{0.42} & 43.28\stdpm{0.40} & 72.89\stdpm{0.39} & 51.11\stdpm{0.37} & 60.17\stdpm{0.42} & 56.96 \\
          & Cache reuse  & 48.01\stdpm{0.43} & 41.02\stdpm{0.41} & 65.01\stdpm{0.39} & 38.09\stdpm{0.44} & 55.06\stdpm{0.45} & 49.44 \\
        \cmidrule(l){2-8}
          & MPIC         & 47.98\stdpm{0.35} & 42.02\stdpm{0.44} & 66.88\stdpm{0.44} & 43.79\stdpm{0.36} & 56.44\stdpm{0.42} & 51.42 \\
          & CacheBlend   & 47.87\stdpm{0.37} & 42.94\stdpm{0.36} & 66.96\stdpm{0.41} & 39.86\stdpm{0.43} & 52.20\stdpm{0.37} & 49.97 \\
          & KVShare      & 48.76\stdpm{0.37} & 40.36\stdpm{0.42} & 65.11\stdpm{0.42} & 38.04\stdpm{0.43} & 55.12\stdpm{0.43} & 49.48 \\
          & ProphetKV    & 54.93\stdpm{0.39} & 42.22\stdpm{0.43} & 71.76\stdpm{0.37} & 43.94\stdpm{0.45} & 58.83\stdpm{0.41} & 54.34 \\
          & \textbf{\textsc{Conduit}} & \textbf{56.24}\stdpm{0.40} & \textbf{43.93}\stdpm{0.44} & \textbf{72.78}\stdpm{0.40} & \textbf{46.73}\stdpm{0.36} & \textbf{60.18}\stdpm{0.44} & \textbf{55.97} \\
        \bottomrule
    \end{tabular}
    \caption{\textbf{Main shifted-prefix results at $r=0.10$.} Entries are computed with each benchmark's official metric on three VLM backbones; higher is better. \emph{Full prefill} and \emph{cache reuse} are reference anchors, and \textbf{bold} marks the best budgeted method per column (highest mean). Cells report mean \textpm{} standard deviation across $5$ random seeds (pp); the \emph{Avg.}\ column reports the unweighted mean across the five datasets. Complete budget sweeps are in Appendix~\ref{app:complete-multi}.}
    \label{tab:main-multi}
\end{table*}

\subsection{Composite Refresh Policy}
\label{sec:method:pipeline}

The final composite score for visual token $t$ is
\begin{equation}
    \hat{s}_t
    =
    s_t\,c_{\pi(t)} ,
    \label{eq:topr}
\end{equation}
and the recomputed visual-token set is selected by a single global top-$k$ rule,
\begin{equation}
    \mathcal{T}_{\mathrm{recompute}}
    =
    \operatorname{TopK}_{t\in\mathcal{V}}(\hat{s}_t,k).
    \label{eq:topr-set}
\end{equation}
The binary refresh mask marks tokens in $\mathcal{T}_{\mathrm{recompute}}$ for recomputation and reuses all other cached visual K/V entries. For any fixed non-negative composite score, global top-$k$ minimizes the induced additive ranking surrogate $\sum_{t\notin\mathcal{T}}\hat s_t^{\,2}$ under the constraint $|\mathcal{T}|=k$ (Lemma~\ref{lem:rd-topk}, Appendix~\ref{app:surrogate}). Because selection remains global, the realized per-image split adapts to each image's token-score distribution, while $c_j$ controls its pressure on the shared refresh budget.

\paragraph{Single-image limit.}
If $K=1$, then $g_1=\bar g$, so $c_1=1$ for every $\lambda$. Equation~\ref{eq:topr} reduces to $\hat s_t=s_t$, the reweighting branch in Figure~\ref{fig:refresh-mask} drops out, and \textsc{Conduit} reduces to the intra-image throughput selector. Single-image cache reuse is therefore not a separate heuristic; it is the one-image limit of the same refresh rule.

\paragraph{Implementation.}
We obtain $a_t^{(\ell)}$ and $\|V_t^{c,(\ell)}\|_2$ from a single auxiliary query-conditioned scoring pass at inference. Once these tensors are available, token scoring is linear in $N_{\mathrm{vis}}$ and the image-level step is $\mathcal{O}(K)$. All main experiments use $\lambda=1$ and all-layer score averaging, without per-benchmark tuning; Algorithm~\ref{alg:conduit} and Appendix~\ref{app:impl} give the full pseudocode, scoring query, head aggregation, and implementation controls.

\begin{figure*}[t]
    \centering
    \includegraphics[width=0.9\textwidth]{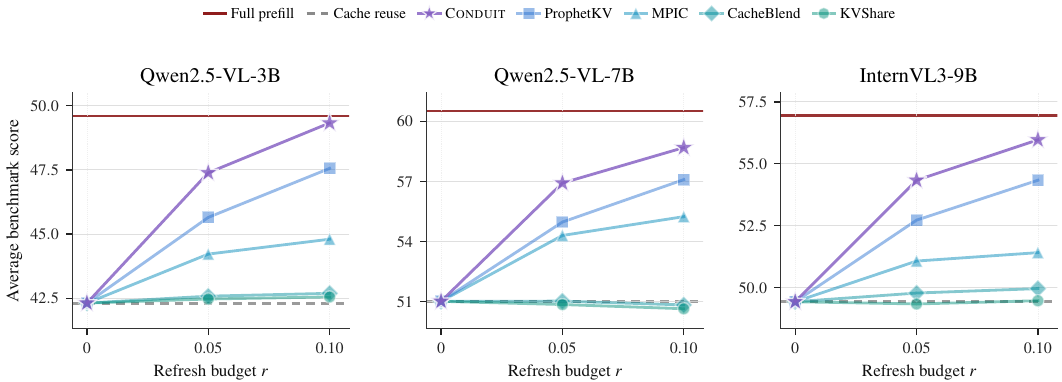}
    \caption{\textbf{Budget--quality curves by backbone.} Each panel reports the unweighted average over LongDocURL, MMLongBench-Doc, SlideVQA, InfoSeek, and ViQuAE; horizontal lines mark full prefill and zero-refresh cache reuse.}
    \label{fig:budget-quality}
\end{figure*}

\section{Experiments}
\label{sec:experiments}

\subsection{Setup}
\label{sec:exp:setup}

\paragraph{Models.}
We evaluate \textsc{Conduit} on three open VLM backbones: Qwen2.5-VL-3B, Qwen2.5-VL-7B \citep{bai2025qwen25vl}, and InternVL3-9B \citep{zhu2025internvl3}. All experiments use public checkpoints without fine-tuning.

\paragraph{Benchmarks.}
The main shifted-prefix evaluation uses five datasets: LongDocURL \citep{deng2024longdocurl}, MMLongBench-Doc \citep{ma2024mmlongbench}, SlideVQA \citep{tanaka2023slidevqa}, InfoSeek \citep{chen2023infoseek}, and ViQuAE \citep{lerner2022viquae}. To test the $K{=}1$ limit of the framework, we separately evaluate MMBench \citep{liu2024mmbench}, OCRBench \citep{liu2023ocrbench}, POPE \citep{li2023pope}, and MMStar \citep{chen2024mmstar} in their native single-image setting. Reported task scores use each benchmark's official metric.

\paragraph{Baselines.}
We compare against four recent cache-reuse and selective-recomputation methods: CacheBlend \citep{yao2024cacheblend}, ProphetKV \citep{wang2026prophetkv}, MPIC \citep{zhao2025mpic}, and KVShare \citep{yang2025kvshare}, re-evaluated in our serving setup with each benchmark's official metric. Two references anchor the comparison: \emph{full prefill}, which recomputes the entire prefix, and \emph{cache reuse}, which refreshes no visual token. Every budgeted method operates at the same refresh ratio $r$, and \textsc{Conduit} uses $\lambda=1$ with all-layer scoring and no per-benchmark tuning.

\paragraph{Cache-reuse protocol.}
For each example, we materialize a cached visual state from the recurring visual content under a cached request and then issue the benchmark-specific query under a shifted prefix that reuses the same visual content. Full prefill recomputes all visual K/V entries under the serving request; cache reuse replays the cached visual K/V without refreshing any visual token; each budgeted method refreshes exactly $\lfloor rN_{\mathrm{vis}}\rfloor$ visual entries from the same cached state. This holds cache materialization fixed across methods, so reported gaps reflect the refresh-selection rule rather than upstream cache construction.

Appendix~\ref{app:protocols} details the scoring pass, and Appendix~\ref{app:complete-multi} reports the complete results and reference anchors.

\subsection{Main Results}
\label{sec:exp:main}

Table~\ref{tab:main-multi} reports the main shifted-prefix results at $r=0.10$, and \mbox{Figure~\ref{fig:budget-quality}} plots the corresponding budget--quality curves. \textsc{Conduit} leads all budgeted methods on every dataset--backbone pair, achieves 97.0--99.5\% of the corresponding full-prefill five-dataset average, and improves over ProphetKV by about 1.7 points on average. Averaged over the three backbones, zero-refresh cache reuse loses $8.1$ points from full prefill, whereas \textsc{Conduit} loses only $1.0$ point. The advantage persists at $r=0.05$: \textsc{Conduit} has the best five-dataset average on all three backbones and ranks first in 13 of 15 dataset--backbone cells. Table~\ref{tab:single-image} shows the $K{=}1$ specialization: with $c_1=1$, \textsc{Conduit} reduces to intra-image throughput selection, and a $5\%$ refresh remains close to the full-prefill average.


\begin{table}[!t]
    \centering
    \scriptsize
    \setlength{\tabcolsep}{1pt}
    \renewcommand{\arraystretch}{1.06}
    \begin{tabular*}{\columnwidth}{@{}l@{\extracolsep{\fill}}cccccc@{}}
        \toprule
        \textbf{Method} & \textbf{$r$} & \textbf{MMBench} & \textbf{OCRBench} & \textbf{POPE} & \textbf{MMStar} & \textbf{Avg.} \\
        \midrule
        \rowcolor{conduithl}\multicolumn{7}{@{}l}{\emph{Qwen2.5-VL-3B}} \\
        Full prefill         & --   & 75.1  & 82.8  & 87.5  & 54.1  & 74.9  \\
        Cache reuse          & 0    & 74.7  & 82.4  & 87.9  & 53.4  & 74.6  \\
        \textsc{Conduit}     & 0.05 & 74.9  & 82.7  & 87.8  & 54.4  & 75.0  \\
        \midrule
        \rowcolor{conduithl}\multicolumn{7}{@{}l}{\emph{Qwen2.5-VL-7B}} \\
        Full prefill         & --   & 78.3  & 88.4  & 87.7  & 60.9  & 78.8  \\
        Cache reuse          & 0    & 77.3  & 87.9  & 86.8  & 59.7  & 77.9  \\
        \textsc{Conduit}     & 0.05 & 77.9  & 88.4  & 87.4  & 60.3  & 78.5  \\
        \midrule
        \rowcolor{conduithl}\multicolumn{7}{@{}l}{\emph{InternVL3-9B}} \\
        Full prefill         & --   & 81.0  & 84.9  & 89.7  & 58.2  & 78.4  \\
        Cache reuse          & 0    & 81.6  & 84.7  & 89.6  & 58.9  & 78.7  \\
        \textsc{Conduit}     & 0.05 & 82.7  & 84.8  & 89.6  & 58.5  & 78.9  \\
        \bottomrule
    \end{tabular*}
    \caption{\textbf{Single-image reuse in the $K{=}1$ limit.} Entries are computed with each benchmark's official metric; higher is better. Since $c_1=1$, \textsc{Conduit} reduces to intra-image throughput selection. At $r=0$ no visual token is recomputed; at $r=0.05$ the top-throughput $5\%$ are refreshed.}
    \label{tab:single-image}
\end{table}

\subsection{Serving Latency}
\label{sec:exp:latency}



\paragraph{Serving setup.}
We measure latency with a compact serving engine derived from nano-vllm and extended with visual-prefix cache materialization and refresh-mask-driven selective recomputation. The engine uses paged KV storage and fused attention kernels; all measurements use batch size 1, greedy decoding, and bf16. The scoring pass computes cached-key attention for the short scoring span with a separate matrix multiplication and softmax, leaving the fused prefill path unchanged. TTFT includes online scoring and selection but excludes one-time cache construction. The latency evaluation uses a context-stratified MMLongBench-Doc subset, so its score is separate from the aggregate in Table~\ref{tab:main-multi}.

Table~\ref{tab:ttft} reports full-path serving cost averaged over three backbones and $4$k/$8$k/$16$k contexts. At $r=0.10$, \textsc{Conduit} reduces prefill cost from 90.36 to 12.24 TFLOPs and TTFT from 0.519 to 0.174 s. The FLOP reduction is larger than the wall-clock speedup because TTFT also includes cache reads, memory traffic, kernel launches, and online scoring.

A separate 16k component profile decomposes its 0.261 s TTFT into the following rounded components: 0.020 s for cache loading, 0.068 s for scoring (26.2\%), 0.00018 s for value-norm access (0.1\%), 0.158 s for selective recomputation (60.5\%), and 0.014 s for text prefill and other work. The scoring pass is 7.3\% of the corresponding 0.93 s full-prefill TTFT. Because cached value norms are query-independent, storing one scalar per token and layer at cache construction would exactly preserve the scores while removing online value-tensor reads.

Across the nine profiled settings, \textsc{Conduit's} latency differs from ProphetKV by at most 11\,ms. On the profiled subset, its aggregate score is 1.20 points higher. KVShare has the lowest measured speedup among the budgeted methods, particularly at 16k contexts. Appendix~\ref{app:ttft-breakdown} gives the per-backbone and per-context totals.

\begin{table}[!t]
    \centering
    \footnotesize
    \setlength{\tabcolsep}{0pt}
    \renewcommand{\arraystretch}{1.06}
    \begin{tabular*}{\columnwidth}{@{}l@{\extracolsep{\fill}}cccc@{}}
        \toprule
        \textbf{Method} & \textbf{Score} & \textbf{TFLOPs} & \textbf{TTFT (s)} & \textbf{Speedup} \\
        \midrule
        Full prefill & 43.59 & 90.36 & 0.519 & 1.00$\times$ \\
        Cache reuse & 41.41 & 2.47 & 0.049 & 10.69$\times$ \\
        \cmidrule(lr){1-5}
        CacheBlend & 41.82 & 11.36 & 0.180 & 2.89$\times$ \\
        MPIC & 41.78 & 9.97 & 0.090 & 5.76$\times$ \\
        KVShare & 41.19 & 11.02 & 0.284 & 1.83$\times$ \\
        ProphetKV & 42.06 & 12.13 & 0.169 & 3.07$\times$ \\
        \textsc{Conduit} & 43.26 & 12.24 & 0.174 & 2.99$\times$ \\
        \bottomrule
    \end{tabular*}
    \caption{\textbf{Serving trade-off on a context-stratified MMLongBench-Doc latency subset.} Quality, prefill TFLOPs, and TTFT are averaged over three backbones and $4$k/$8$k/$16$k contexts. Budgeted methods use $r=0.10$; higher \emph{Score} and lower \emph{TFLOPs}/\emph{TTFT} are better.}
    \label{tab:ttft}
\end{table}


\subsection{Ablations}
\label{sec:exp:ablation}

Table~\ref{tab:lambda-sweep} evaluates the image-reweighting strength on Qwen2.5-VL-3B at $r=0.10$. The average rises from 48.53 at $\lambda=0$ to 49.33 at $\lambda=1$, while the maximum per-benchmark variation among $\lambda\in\{0.5,0.75,1.0\}$ is only 0.38 points. We use $\lambda=1$ across models and datasets without per-benchmark tuning. Setting $\lambda=0$ makes $c_j\equiv1$ and is exactly the ``drop $c_j$'' variant in Table~\ref{tab:ablations}.

\begin{table}[!tb]
    \centering
    \scriptsize
    \setlength{\tabcolsep}{1.5pt}
    \renewcommand{\arraystretch}{1.06}
    \begin{tabular*}{\columnwidth}{@{}c@{\extracolsep{\fill}}cccccc@{}}
        \toprule
        $\boldsymbol{\lambda}$ & \textbf{LongDocURL} & \textbf{MMLB-Doc} & \textbf{SlideVQA} & \textbf{InfoSeek} & \textbf{ViQuAE} & \textbf{Avg.} \\
        \midrule
        0    & 55.28 & 33.73 & 69.79 & 42.85 & 41.02 & 48.53 \\
        0.25 & 55.29 & 34.06 & 70.29 & 43.30 & 41.27 & 48.84 \\
        0.50 & 55.92 & 34.38 & \textbf{70.97} & 43.30 & 41.91 & 49.30 \\
        0.75 & 55.83 & 34.37 & 70.59 & \textbf{43.44} & 41.91 & 49.23 \\
        1.00 & \textbf{56.00} & \textbf{34.40} & 70.88 & 43.39 & \textbf{42.00} & \textbf{49.33} \\
        \bottomrule
    \end{tabular*}
    \caption{\textbf{Sensitivity to image reweighting} on Qwen2.5-VL-3B at $r=0.10$. Entries use official benchmark metrics; MMLB-Doc abbreviates MMLongBench-Doc. Higher is better. $\lambda=0$ is exactly the ``drop $c_j$'' ablation.}
    \label{tab:lambda-sweep}
\end{table}

Table~\ref{tab:ablations} isolates the two factors in the composite score. Removing the value norm turns the token score into query attention alone and causes the larger drop on two of the three backbones and on average; removing $c_j$ disables image-level amplification and gives a consistent loss. Both factors therefore contribute empirical gains to norm-aware token ranking and cross-image reweighting.

\begin{table}[!tb]
    \centering
    \footnotesize
    \setlength{\tabcolsep}{2.2pt}
    \renewcommand{\arraystretch}{1.12}
    \resizebox{\columnwidth}{!}{%
    \begin{tabular}{@{}lccc@{}}
        \toprule
        \textbf{Variant} & \textbf{Qwen2.5-VL-3B} & \textbf{Qwen2.5-VL-7B} & \textbf{InternVL3-9B} \\
        \midrule
        \textsc{Conduit} (full)        & \textbf{49.33} & \textbf{58.70} & \textbf{55.97} \\
        $\;-$ drop $c_j$               & 48.53 & 58.23 & 55.43 \\
        $\;-$ drop $\|V\|$             & 48.44 & 57.35 & 55.44 \\
        \bottomrule
    \end{tabular}%
    }
    \caption{\textbf{Factor ablation of the composite score} ($r=0.10$, averaged over the five shifted-prefix datasets). Dropping the image coefficient sets $c_j\equiv1$ (intra-image throughput only); dropping the value norm ranks by query attention alone.}
    \label{tab:ablations}
\end{table}

\subsection{Analyses}
\label{sec:exp:analyses}

Appendix~\ref{app:diagnostics} examines the composite score through surrogate-ranking, cached-norm-stability, token-set-overlap, cross-image-reweighting, and scale-similarity diagnostics. The larger-sample extensions use 64 prompts for surrogate ranking, 64 shifted-prefix pairs for cached-norm stability, and 64 images per type for scale similarity; the controlled projected-error ranking reaches Spearman $0.79$--$0.89$. These diagnostics characterize the ranking signals, while Table~\ref{tab:ablations} shows that both score factors improve end-to-end quality. On Qwen2.5-VL-3B at $r=0.10$, a manual audit of the 23 MMLongBench-Doc and LongDocURL cases in which \textsc{Conduit} and full prefill produce different answers finds 15 with limited within-image coverage of the answer-bearing page, four with near-tied evidence and distractor page scores, and four qualitatively consistent with unmodeled attention drift (Appendix~\ref{app:failure-audit}). Appendix~\ref{app:case-study} gives a representative LongDocURL example in which \textsc{Conduit} returns the same answer as full prefill.

\section{Conclusion}
\label{sec:conclusion}

We introduced \textsc{Conduit}, a training-free residual-stream restoration policy for visual KV-cache reuse under shifted prefix contexts. The method adapts norm-weighted attention to rank stale visual tokens for refresh, using cached value norm as a pre-output proxy for write scale, and adds empirical image-level relevance amplification to reweight candidates across images. With a single image, the coefficient is one and the same intra-image rule applies.

Across three open VLM backbones and nine visual benchmarks, \textsc{Conduit} requires no architectural modification and adds a single query-conditioned scoring pass. At a $10\%$ refresh budget, it achieves $97.0$--$99.5\%$ of the corresponding full-prefill five-dataset shifted-prefix average and leads the budgeted baselines on average; on a context-stratified MMLongBench-Doc latency subset, it achieves a $2.99\times$ TTFT speedup relative to full prefill. These results show that norm-aware token ranking and image-level relevance amplification provide an effective and efficient policy for shifted-prefix visual cache reuse.

\section*{Limitations}
\label{sec:limitations}

We evaluate \textsc{Conduit} on static-image and document shifted-prefix reuse; video, streaming observations, and long-horizon agentic workflows remain untested. The single-image setting offers less headroom because zero-refresh reuse is already close to full prefill. Our latency measurements use a compact nano-vllm-derived backend at batch size one and do not cover continuous-batching production servers. The method reduces repeated-prefill compute but retains the storage cost of cached visual K/V, with smaller benefits when the reusable visual prefix is short relative to the surrounding text. Finally, cached value norm is a practical ranking proxy, and the selector does not explicitly model key- or query-state-induced attention drift.

\section*{Ethics Statement}
\label{sec:ethics}

We study an inference-time scoring method for vision--language cache reuse and evaluate it solely through inference on three publicly released VLMs: Qwen2.5-VL-3B / Qwen2.5-VL-7B \citep{bai2025qwen25vl} and InternVL3-9B \citep{zhu2025internvl3}. Our experiments use publicly available benchmarks (LongDocURL, MMLongBench-Doc, SlideVQA, InfoSeek, ViQuAE, MMBench, OCRBench, POPE, and MMStar) under their respective licenses and usage terms. We train no new model, involve no human subjects, and collect no new data. \textsc{Conduit} reduces repeated-prefill FLOPs and may lower the energy cost of long-context multimodal serving, but it also lowers the operational cost of generating multimodal content at scale and therefore inherits the dual-use risks of the underlying VLMs.

\section*{Acknowledgments}

This study was supported by the Shenzhen Medical Research Fund (award no.~A2503002 to Y.L.), the National Key R\&D Program of China (grant no.~2025YFA0923500 to Y.L.), the Chinese University of Hong Kong (CUHK; award nos.~4937025, 4937026, 5501517, and 5501329), and the IdeaBooster Fund (award nos.~IDBF23ENG05 and IDBF24ENG06 to Y.L.), and was partially supported by grants from the Research Grants Council of the Hong Kong Special Administrative Region (Hong Kong SAR), China (project nos.~CUHK 24204023 and 14208525 to Y.L.), and the Innovation and Technology Commission of the Hong Kong SAR, China (project nos.~GHP/065/21SZ, ITS/247/23FP, and PRP/033/24FX to Y.L.). This research was also supported by the Research Matching Grant Scheme at CUHK (award nos.~8601603 and 8601663 to Y.L.). We also thank the Shenzhen Loop Area Institute for its support under grant FPF10120250014.

\clearpage
\bibliography{custom}

\clearpage
\appendix
\section{Implementation and Reproducibility}
\label{app:protocols}

This section records the method-specific operational details for the shifted-prefix results in Appendix~\ref{app:complete-multi}: the refresh-mask algorithm, the query-conditioned scoring pass, the fixed hyperparameters, and the native single-image evaluation protocol.

\subsection{Refresh-Mask Algorithm}
\label{app:algorithm}

Algorithm~\ref{alg:conduit} constructs the binary visual-token refresh mask used in all experiments. The scoring span $\mathcal{Q}$ is the final text segment before decoding. Selected visual positions are recomputed for the serving request; all other visual K/V entries are reused from the cache.

\subsection{Scoring Pass and Hyperparameters}
\label{app:impl}

\paragraph{Scoring forward pass.}
The scoring inputs $a_t^{(\ell)}$ and $V_t^{c,(\ell)}$ are read from a single query-conditioned forward pass over cached visual K/V. The scoring queries are the final textual block immediately before the first decode step; when an image segment is followed by an instruction or question, this block is the text after the last image segment. Cached-key attention is averaged over the scoring span and over heads, and $\|V_t^c\|_2$ is computed on the concatenated multi-head cached value vector. The single-head notation in Equation~\ref{eq:residual-update} is therefore only a mathematical simplification; the implementation follows Equation~\ref{eq:head-mean}. We use this head- and query-averaged cached-key attention $a_t$ as the deployed statistic in all experiments.

\begin{algorithm}[t]
\caption{\textsc{Conduit} refresh mask}
\label{alg:conduit}
\small
\begin{algorithmic}[1]
\Require visual-token sets $\{\mathcal{V}_j\}_{j=1}^{K}$; cached visual K/V entries; image map $\pi(t)$; query span $\mathcal{Q}$; scored layers $\mathcal{L}$; refresh ratio $r$; reweighting strength $\lambda$
\Ensure binary refresh mask $M$ over visual positions
\State $\mathcal{V} \gets \bigcup_{j=1}^{K}\mathcal{V}_j$, \quad $N_{\mathrm{vis}}\gets|\mathcal{V}|$, \quad $k\gets\lfloor rN_{\mathrm{vis}}\rfloor$
\If{$k=0$}
    \State \Return $M_t=0$ for all $t\in\mathcal{V}$
\EndIf
\State Run one query-conditioned scoring pass to read $\alpha_{q,t}^{c,(\ell,h)}$ and $V_t^{c,(\ell)}$ for $q\in\mathcal{Q}$, $t\in\mathcal{V}$, $\ell\in\mathcal{L}$, and heads $h=1,\ldots,H$
\For{$\ell\in\mathcal{L}$}
    \For{$t\in\mathcal{V}$}
        \State $a_t^{(\ell)}\gets \frac{1}{|\mathcal{Q}|H}\sum_{q\in\mathcal{Q}}\sum_{h=1}^{H}\alpha_{q,t}^{c,(\ell,h)}$
        \State $v_t^{(\ell)}\gets \|V_t^{c,(\ell)}\|_2$
    \EndFor
\EndFor
\For{$t\in\mathcal{V}$}
    \State $s_t\gets \frac{1}{|\mathcal{L}|}\sum_{\ell\in\mathcal{L}} a_t^{(\ell)}v_t^{(\ell)}$
\EndFor
\For{$j=1,\ldots,K$}
    \State $g_j\gets \frac{1}{|\mathcal{L}|\,|\mathcal{V}_j|}\sum_{\ell\in\mathcal{L}}\sum_{t\in\mathcal{V}_j}a_t^{(\ell)}$
\EndFor
\State $\bar g\gets \frac{1}{K}\sum_{j=1}^{K}g_j$
\For{$j=1,\ldots,K$}
    \State $c_j\gets (1-\lambda)+\lambda g_j/\bar g$
\EndFor
\For{$t\in\mathcal{V}$}
    \State $\hat s_t\gets s_t\,c_{\pi(t)}$
\EndFor
\State $\mathcal{T}_{\mathrm{recompute}}\gets \operatorname{TopK}_{t\in\mathcal{V}}(\hat s_t,k)$
\State \Return $M_t=\mathbf{1}[t\in\mathcal{T}_{\mathrm{recompute}}]$ for all $t\in\mathcal{V}$
\end{algorithmic}
\end{algorithm}

\paragraph{Scored layers.}
Unless otherwise stated, scores are averaged over all decoder layers. The implementation also supports single-layer and contiguous-layer-slice sweeps for diagnostics, but all main results use full-depth averaging for every backbone and benchmark.

\paragraph{Fixed hyperparameters.}
All main benchmark comparisons use $\lambda=1$ and all-layer score averaging, with no per-benchmark tuning. The setting $\lambda=1$ applies the empirical image-level relevance amplification described in Appendix~\ref{app:lagrangian-cj}, whereas $\lambda=0$ sets $c_j=1$ for every image and is exactly the ``drop $c_j$'' ablation in Table~\ref{tab:ablations}. Table~\ref{tab:lambda-sweep} reports the global sensitivity sweep used to select $\lambda=1$ once as the shared setting across models and datasets.

\subsection{Single-Image Evaluation Protocol}
\label{app:single-image}

The four single-image benchmarks use their official splits and scoring metrics. We build the cached visual state by running each image under a fixed generic instruction and storing its per-layer visual K/V tensors. At serving time, the model receives the benchmark-specific question while reusing the cached visual entry. For $r=0$, no visual token is recomputed; for $r=0.05$, the top $5\%$ visual tokens under the throughput score are refreshed using the same scored-layer schedule as the main evaluation. Since $K=1$, the image coefficient is $c_1=1$ and the protocol tests the intra-image throughput rule directly.

\section{Complete Experimental Results}
\label{app:complete-multi}

Tables~\ref{tab:complete-qwen3b}--\ref{tab:complete-intern9b} report the complete five-dataset shifted-prefix results for all backbones at $r\in\{0.05,0.10\}$. The main table shows $r=0.10$, and the text also summarizes the aggregate $r=0.05$ result. \emph{Full prefill} and \emph{cache reuse} are reference anchors.

\begin{table*}[t]
    \centering
    \small
    \setlength{\tabcolsep}{5pt}
    \renewcommand{\arraystretch}{1.12}
    \begin{tabular*}{\textwidth}{@{\extracolsep{\fill}} l c c c c c c @{}}
        \toprule
        \textbf{Method} & \textbf{LongDocURL} & \textbf{MMLongBench-Doc} & \textbf{SlideVQA} & \textbf{InfoSeek} & \textbf{ViQuAE} & \textbf{Avg.} \\
        \midrule
        Full prefill & 56.39\stdpm{0.42} & 34.19\stdpm{0.44} & 70.46\stdpm{0.42} & 44.36\stdpm{0.42} & 42.58\stdpm{0.44} & 49.60 \\
        Cache reuse & 47.93\stdpm{0.35} & 31.80\stdpm{0.45} & 64.59\stdpm{0.44} & 32.34\stdpm{0.40} & 34.88\stdpm{0.35} & 42.31 \\
        \midrule
        \multicolumn{7}{l}{\emph{$r=0.05$}} \\
        MPIC & 47.87\stdpm{0.38} & 31.66\stdpm{0.43} & 64.49\stdpm{0.41} & 40.29\stdpm{0.44} & 36.78\stdpm{0.39} & 44.22 \\
        CacheBlend & 47.61\stdpm{0.45} & 31.83\stdpm{0.44} & 64.74\stdpm{0.38} & 32.42\stdpm{0.36} & 36.30\stdpm{0.39} & 42.58 \\
        KVShare & 47.44\stdpm{0.43} & 31.89\stdpm{0.35} & 64.64\stdpm{0.36} & 32.35\stdpm{0.37} & 36.05\stdpm{0.40} & 42.47 \\
        ProphetKV & 51.62\stdpm{0.36} & 31.71\stdpm{0.36} & 66.54\stdpm{0.36} & 39.41\stdpm{0.43} & 38.98\stdpm{0.42} & 45.65 \\
        \textbf{\textsc{Conduit}} & \textbf{53.02}\stdpm{0.41} & \textbf{33.80}\stdpm{0.41} & \textbf{68.40}\stdpm{0.40} & \textbf{41.39}\stdpm{0.39} & \textbf{40.35}\stdpm{0.36} & \textbf{47.39} \\
        \midrule
        \multicolumn{7}{l}{\emph{$r=0.10$}} \\
        MPIC & 48.16\stdpm{0.38} & 31.35\stdpm{0.40} & 64.31\stdpm{0.36} & 42.16\stdpm{0.40} & 38.00\stdpm{0.40} & 44.80 \\
        CacheBlend & 48.02\stdpm{0.42} & 31.18\stdpm{0.39} & 64.26\stdpm{0.39} & 32.60\stdpm{0.36} & 37.38\stdpm{0.40} & 42.69 \\
        KVShare & 48.10\stdpm{0.41} & 31.51\stdpm{0.37} & 64.43\stdpm{0.40} & 32.36\stdpm{0.43} & 36.33\stdpm{0.37} & 42.55 \\
        ProphetKV & 54.35\stdpm{0.40} & 32.25\stdpm{0.37} & 68.52\stdpm{0.44} & 42.30\stdpm{0.40} & 40.36\stdpm{0.38} & 47.56 \\
        \textbf{\textsc{Conduit}} & \textbf{56.00}\stdpm{0.44} & \textbf{34.40}\stdpm{0.37} & \textbf{70.88}\stdpm{0.38} & \textbf{43.39}\stdpm{0.37} & \textbf{42.00}\stdpm{0.36} & \textbf{49.33} \\
        \bottomrule
    \end{tabular*}
    \caption{\textbf{Complete Qwen2.5-VL-3B shifted-prefix results at $r\in\{0.05,0.10\}$.} Entries are computed with the official metrics for LongDocURL, MMLongBench-Doc, SlideVQA, InfoSeek, and ViQuAE; higher is better. \emph{Full prefill} and \emph{cache reuse} are reference anchors, \textbf{bold} marks the best budgeted method per cell (highest mean), and \emph{Avg.}\ is the unweighted mean across the five datasets. Cells report mean \textpm{} standard deviation across $5$ random seeds (pp).}
    \label{tab:complete-qwen3b}
\end{table*}

\begin{table*}[t]
    \centering
    \small
    \setlength{\tabcolsep}{5pt}
    \renewcommand{\arraystretch}{1.12}
    \begin{tabular*}{\textwidth}{@{\extracolsep{\fill}} l c c c c c c @{}}
        \toprule
        \textbf{Method} & \textbf{LongDocURL} & \textbf{MMLongBench-Doc} & \textbf{SlideVQA} & \textbf{InfoSeek} & \textbf{ViQuAE} & \textbf{Avg.} \\
        \midrule
        Full prefill & 65.40\stdpm{0.39} & 53.43\stdpm{0.44} & 73.70\stdpm{0.39} & 54.54\stdpm{0.42} & 55.58\stdpm{0.41} & 60.53 \\
        Cache reuse & 56.59\stdpm{0.42} & 51.48\stdpm{0.40} & 68.83\stdpm{0.37} & 32.46\stdpm{0.39} & 45.76\stdpm{0.41} & 51.02 \\
        \midrule
        \multicolumn{7}{l}{\emph{$r=0.05$}} \\
        MPIC & 55.81\stdpm{0.37} & 51.81\stdpm{0.36} & 68.46\stdpm{0.38} & \textbf{46.39}\stdpm{0.43} & 49.06\stdpm{0.35} & 54.31 \\
        CacheBlend & 55.87\stdpm{0.41} & \textbf{51.86}\stdpm{0.37} & 68.85\stdpm{0.35} & 32.68\stdpm{0.36} & 45.87\stdpm{0.42} & 51.03 \\
        KVShare & 56.08\stdpm{0.44} & 51.42\stdpm{0.37} & 68.23\stdpm{0.39} & 32.81\stdpm{0.43} & 45.76\stdpm{0.44} & 50.86 \\
        ProphetKV & 61.20\stdpm{0.36} & 48.71\stdpm{0.36} & 70.38\stdpm{0.36} & 43.35\stdpm{0.38} & 51.26\stdpm{0.36} & 54.98 \\
        \textbf{\textsc{Conduit}} & \textbf{62.65}\stdpm{0.41} & 51.12\stdpm{0.39} & \textbf{71.90}\stdpm{0.37} & 46.03\stdpm{0.41} & \textbf{52.97}\stdpm{0.38} & \textbf{56.93} \\
        \midrule
        \multicolumn{7}{l}{\emph{$r=0.10$}} \\
        MPIC & 55.93\stdpm{0.40} & 51.69\stdpm{0.39} & 68.82\stdpm{0.36} & 49.07\stdpm{0.39} & 50.73\stdpm{0.35} & 55.25 \\
        CacheBlend & 56.01\stdpm{0.38} & 51.23\stdpm{0.42} & 68.74\stdpm{0.36} & 32.32\stdpm{0.41} & 45.92\stdpm{0.38} & 50.84 \\
        KVShare & 55.35\stdpm{0.36} & 51.46\stdpm{0.43} & 67.94\stdpm{0.41} & 32.74\stdpm{0.41} & 45.78\stdpm{0.42} & 50.65 \\
        ProphetKV & 62.23\stdpm{0.44} & 50.91\stdpm{0.42} & 71.67\stdpm{0.41} & 47.75\stdpm{0.43} & 52.93\stdpm{0.42} & 57.10 \\
        \textbf{\textsc{Conduit}} & \textbf{63.72}\stdpm{0.41} & \textbf{52.82}\stdpm{0.45} & \textbf{73.15}\stdpm{0.39} & \textbf{49.36}\stdpm{0.39} & \textbf{54.47}\stdpm{0.42} & \textbf{58.70} \\
        \bottomrule
    \end{tabular*}
    \caption{\textbf{Complete Qwen2.5-VL-7B shifted-prefix results at $r\in\{0.05,0.10\}$.} Entries are computed with the official metrics for LongDocURL, MMLongBench-Doc, SlideVQA, InfoSeek, and ViQuAE; higher is better. \emph{Full prefill} and \emph{cache reuse} are reference anchors, \textbf{bold} marks the best budgeted method per cell (highest mean), and \emph{Avg.}\ is the unweighted mean across the five datasets. Cells report mean \textpm{} standard deviation across $5$ random seeds (pp).}
    \label{tab:complete-qwen7b}
\end{table*}

\begin{table*}[t]
    \centering
    \small
    \setlength{\tabcolsep}{5pt}
    \renewcommand{\arraystretch}{1.12}
    \begin{tabular*}{\textwidth}{@{\extracolsep{\fill}} l c c c c c c @{}}
        \toprule
        \textbf{Method} & \textbf{LongDocURL} & \textbf{MMLongBench-Doc} & \textbf{SlideVQA} & \textbf{InfoSeek} & \textbf{ViQuAE} & \textbf{Avg.} \\
        \midrule
        Full prefill & 57.37\stdpm{0.42} & 43.28\stdpm{0.40} & 72.89\stdpm{0.39} & 51.11\stdpm{0.37} & 60.17\stdpm{0.42} & 56.96 \\
        Cache reuse & 48.01\stdpm{0.43} & 41.02\stdpm{0.41} & 65.01\stdpm{0.39} & 38.09\stdpm{0.44} & 55.06\stdpm{0.45} & 49.44 \\
        \midrule
        \multicolumn{7}{l}{\emph{$r=0.05$}} \\
        MPIC & 47.62\stdpm{0.43} & 41.77\stdpm{0.37} & 67.03\stdpm{0.40} & 42.70\stdpm{0.41} & 56.28\stdpm{0.37} & 51.08 \\
        CacheBlend & 48.27\stdpm{0.37} & 42.35\stdpm{0.40} & 66.34\stdpm{0.35} & 39.88\stdpm{0.37} & 52.12\stdpm{0.41} & 49.79 \\
        KVShare & 48.67\stdpm{0.43} & 40.50\stdpm{0.42} & 64.22\stdpm{0.42} & 38.41\stdpm{0.43} & 54.97\stdpm{0.42} & 49.35 \\
        ProphetKV & 52.55\stdpm{0.38} & 41.74\stdpm{0.41} & 69.30\stdpm{0.35} & 42.70\stdpm{0.39} & 57.36\stdpm{0.44} & 52.73 \\
        \textbf{\textsc{Conduit}} & \textbf{54.45}\stdpm{0.35} & \textbf{42.94}\stdpm{0.41} & \textbf{70.72}\stdpm{0.40} & \textbf{45.01}\stdpm{0.43} & \textbf{58.58}\stdpm{0.41} & \textbf{54.34} \\
        \midrule
        \multicolumn{7}{l}{\emph{$r=0.10$}} \\
        MPIC & 47.98\stdpm{0.35} & 42.02\stdpm{0.44} & 66.88\stdpm{0.44} & 43.79\stdpm{0.36} & 56.44\stdpm{0.42} & 51.42 \\
        CacheBlend & 47.87\stdpm{0.37} & 42.94\stdpm{0.36} & 66.96\stdpm{0.41} & 39.86\stdpm{0.43} & 52.20\stdpm{0.37} & 49.97 \\
        KVShare & 48.76\stdpm{0.37} & 40.36\stdpm{0.42} & 65.11\stdpm{0.42} & 38.04\stdpm{0.43} & 55.12\stdpm{0.43} & 49.48 \\
        ProphetKV & 54.93\stdpm{0.39} & 42.22\stdpm{0.43} & 71.76\stdpm{0.37} & 43.94\stdpm{0.45} & 58.83\stdpm{0.41} & 54.34 \\
        \textbf{\textsc{Conduit}} & \textbf{56.24}\stdpm{0.40} & \textbf{43.93}\stdpm{0.44} & \textbf{72.78}\stdpm{0.40} & \textbf{46.73}\stdpm{0.36} & \textbf{60.18}\stdpm{0.44} & \textbf{55.97} \\
        \bottomrule
    \end{tabular*}
    \caption{\textbf{Complete InternVL3-9B shifted-prefix results at $r\in\{0.05,0.10\}$.} Entries are computed with the official metrics for LongDocURL, MMLongBench-Doc, SlideVQA, InfoSeek, and ViQuAE; higher is better. \emph{Full prefill} and \emph{cache reuse} are reference anchors, \textbf{bold} marks the best budgeted method per cell (highest mean), and \emph{Avg.}\ is the unweighted mean across the five datasets. Cells report mean \textpm{} standard deviation across $5$ random seeds (pp).}
    \label{tab:complete-intern9b}
\end{table*}

\paragraph{Cache-reuse anchor.}
\label{app:sink-only}
The \emph{cache reuse} row in Table~\ref{tab:main-multi} and Tables~\ref{tab:complete-qwen3b}--\ref{tab:complete-intern9b} is the zero-refresh anchor. It reuses the stored visual K/V state verbatim under the same cached-prefix materialization used by the selective methods and runs only the text-side serving pass for the new request. The row therefore measures the fastest reuse point and the quality lost when no visual residual channel is restored.

\subsection{Budget--Quality Curves}
\label{app:budget-quality}

Figure~\ref{fig:budget-quality} traces the five-dataset average against the refresh budget $r$ on each backbone. All budgeted methods start from the zero-refresh cache-reuse anchor at $r=0$. In average score, \textsc{Conduit} lies above every budgeted baseline at both $r=0.05$ and $r=0.10$ on all three backbones, with ProphetKV the closest; CacheBlend and KVShare stay near the cache-reuse anchor across the sweep. For \textsc{Conduit}, most of the gap to full prefill closes by $r=0.05$, and the step from $r=0.05$ to $r=0.10$ adds a smaller increment. We report the main results at $r=0.10$, a low-budget operating point that recovers $97.0$--$99.5\%$ of full-prefill quality.

\paragraph{Cross-scale $\lambda$ spot check.}
The main sensitivity sweep in Table~\ref{tab:lambda-sweep} uses Qwen2.5-VL-3B. On Qwen2.5-VL-7B, an additional spot check on LongDocURL and InfoSeek gives scores of $63.08/48.85$, $63.33/49.11$, and $63.72/49.36$ at $\lambda\in\{0,0.5,1\}$, respectively. The improvement from $\lambda=0$ to $1$ therefore also appears at the larger scale, while the intermediate setting remains close to both endpoints.

\subsection{Serving Latency by Backbone and Context}
\label{app:ttft-breakdown}

Table~\ref{tab:ttft-breakdown} reports the per-backbone, per-context TTFT on the context-stratified latency subset behind Table~\ref{tab:ttft}; averaging its nine settings reproduces the TTFT column there. TTFT grows with context length for every method; full-prefill cost grows fastest, so the relative latency savings generally widen as the prefix lengthens. \textsc{Conduit} stays within 0--11\,ms of ProphetKV across the nine settings, both far below full prefill and slightly above the zero-refresh cache-reuse floor; cache reuse and MPIC are the cheapest, while KVShare is the slowest budgeted method at the longest contexts.

\paragraph{Online-component decomposition.}
Table~\ref{tab:ttft-components} reports a separate backbone-averaged profile of the \textsc{Conduit} path, decomposed into cache loading, the query-conditioned scoring pass, cached value-norm access, selective recomputation, and the remaining text-prefill cost. At 16k tokens, scoring takes 68\,ms, or $26.2\%$ of this profiled TTFT and $7.3\%$ of the corresponding 0.93\,s full-prefill profile. Selective recomputation is the largest component. Because $\|V_t^c\|_2$ depends only on cached content, an exact implementation can store one scalar per token and layer at cache construction and avoid online value-tensor reads without changing the score. The reported measurements use the current path, in which those reads account for only $0.1\%$ of the 16k profile.

\begin{table*}[t]
    \centering
    \footnotesize
    \setlength{\tabcolsep}{4pt}
    \renewcommand{\arraystretch}{1.12}
    \begin{tabular*}{\textwidth}{@{\extracolsep{\fill}} l ccc ccc ccc @{}}
        \toprule
        & \multicolumn{3}{c}{\textbf{Qwen2.5-VL-3B}} & \multicolumn{3}{c}{\textbf{Qwen2.5-VL-7B}} & \multicolumn{3}{c}{\textbf{InternVL3-9B}} \\
        \cmidrule(lr){2-4}\cmidrule(lr){5-7}\cmidrule(lr){8-10}
        \textbf{Method} & 4k & 8k & 16k & 4k & 8k & 16k & 4k & 8k & 16k \\
        \midrule
        Full prefill & 0.145 & 0.333 & 0.780 & 0.260 & 0.642 & 1.380 & 0.182 & 0.327 & 0.624 \\
        Cache reuse  & 0.040 & 0.042 & 0.052 & 0.031 & 0.035 & 0.054 & 0.052 & 0.057 & 0.074 \\
        \cmidrule(lr){1-10}
        CacheBlend & 0.092 & 0.137 & 0.273 & 0.095 & 0.161 & 0.349 & 0.113 & 0.149 & 0.250 \\
        MPIC       & 0.037 & 0.054 & 0.096 & 0.049 & 0.084 & 0.184 & 0.071 & 0.096 & 0.140 \\
        KVShare    & 0.116 & 0.184 & 0.421 & 0.124 & 0.317 & 0.653 & 0.128 & 0.189 & 0.427 \\
        ProphetKV  & 0.078 & 0.117 & 0.236 & 0.084 & 0.166 & 0.370 & 0.115 & 0.146 & 0.212 \\
        \textbf{\textsc{Conduit}} & 0.079 & 0.122 & 0.247 & 0.086 & 0.169 & 0.378 & 0.115 & 0.148 & 0.218 \\
        \bottomrule
    \end{tabular*}
    \caption{\textbf{Time to first token (s) by backbone and context length} on the context-stratified MMLongBench-Doc latency subset; lower is better. Budgeted methods (CacheBlend, MPIC, KVShare, ProphetKV, \textsc{Conduit}) use a $10\%$ refresh budget; \emph{full prefill} and \emph{cache reuse} are reference anchors. Averaging across the nine backbone--context settings yields the TTFT column of Table~\ref{tab:ttft}.}
    \label{tab:ttft-breakdown}

    \vspace{0.6em}
    \centering
    \footnotesize
    \setlength{\tabcolsep}{4pt}
    \renewcommand{\arraystretch}{1.12}
    \begin{tabular*}{\textwidth}{@{\extracolsep{\fill}} l ccc @{}}
        \toprule
        \textbf{Component} & \textbf{4k} & \textbf{8k} & \textbf{16k} \\
        \midrule
        Cache load & 6.4\% & 5.7\% & 7.7\% \\
        Scoring pass & 46.8\% & 36.4\% & 26.2\% \\
        Value-norm access & 0.1\% & 0.1\% & 0.1\% \\
        Selective recompute & 40.4\% & 52.1\% & 60.5\% \\
        Text prefill + other & 6.4\% & 5.7\% & 5.5\% \\
        \midrule
        Total & 100\% & 100\% & 100\% \\
        \bottomrule
    \end{tabular*}
    \caption{\textbf{Backbone-averaged \textsc{Conduit} TTFT composition (\%)} on MMLongBench-Doc at $r=0.10$. Each column reports the component share at one context length and is normalized to 100\%; percentages are rounded to one decimal. The corresponding total TTFTs are 0.094, 0.141, and 0.261\,s for 4k, 8k, and 16k, respectively. This profile is independent of the nine-setting aggregation in Table~\ref{tab:ttft-breakdown}; TTFT includes online scoring and selection and excludes one-time cache construction.}
    \label{tab:ttft-components}
\end{table*}

\section{Surrogate Analysis}
\label{app:theory}

This section develops the analytical token-ranking surrogate and the empirical image-level coefficient used in Section~\ref{sec:method}. Section~\ref{app:surrogate} motivates the throughput score from a local staleness model and connects global top-$k$ selection to its additive surrogate. Section~\ref{app:lagrangian-cj} gives the normalization, limiting cases, and empirical support for image-level relevance amplification.

\subsection{From Local Error to a Ranking Surrogate}
\label{app:surrogate}

The exact residual error is unavailable at scoring time. We motivate the local surrogate score $s_t^{\mathrm{sur}}=\bar\alpha_t\|V_t^c\|_2$ through a diagonal staleness view of the local model. Building on the norm-weighted-attention product from prior work, the analysis gives this signal an optimization role in budgeted cache refresh.

\paragraph{First-order decomposition and local readout signal.}
At a query position $q$ in one decoder head, the difference between the full-prefill output and the cached output decomposes as
\[
\begin{aligned}
    h_q^{\star}-h_q^c
    =\;& \sum_t \alpha^c_{q,t}\,\Delta \widetilde V_t
    + \sum_t \Delta\alpha_{q,t}\,\widetilde V_t^c \\
    & + O(\Delta\alpha\,\Delta \widetilde V),
\end{aligned}
\]
where $\widetilde V_t$ denotes the effective per-token write after the attention output map, $\Delta\widetilde V_t=\widetilde V_t^{\star}-\widetilde V_t^c=\xi_t$, and $\Delta\alpha_{q,t}=\alpha^{\star}_{q,t}-\alpha^c_{q,t}$ contains attention drift induced by fresh keys and by query-state changes propagated from earlier layers. Because evaluating attention drift requires a fresh execution, \textsc{Conduit} ranks tokens through the first, value-write term. Using cached-key attention $\alpha^c_{q,t}$ as the local readout signal, its squared local error decomposes as
\[
\begin{aligned}
    \mathbb{E}_q\Big\|\sum_{t\notin\mathcal{T}}\alpha^c_{q,t}\,\xi_t\Big\|_2^2
    =\;& \sum_{t\notin\mathcal{T}}\bar\alpha_t^2\|\xi_t\|_2^2 \\
    & + \sum_{t\neq s}\mathbb{E}_q\big[\alpha^c_{q,t}\alpha^c_{q,s}\langle\xi_t,\xi_s\rangle\big],
\end{aligned}
\]
with RMS cached-key attention $\bar\alpha_t:=(\mathbb{E}_q\,(\alpha^c_{q,t})^2)^{1/2}$. The deployed aggregation uses the mean over query positions and heads in Equation~\ref{eq:head-mean}, a practical statistic evaluated end to end.

\paragraph{Assumption 1 --- diagonal approximation.}
We drop the cross-token inner-product terms,
\[
    \mathbb{E}_q\big[\alpha^c_{q,t}\alpha^c_{q,s}\langle\xi_t,\xi_s\rangle\big]\approx 0,
    \quad t\neq s,
\]
under the modeling assumption that stale displacements from different visual tokens are not consistently aligned in residual space. Dropping these terms gives the diagonal oracle ranking score in Equation~\ref{eq:oracle-score},
\[
    o_t = \bar\alpha_t\,\|\xi_t\|_2 ,
\]
which is the square-root diagonal-risk ranking score that \textsc{Conduit} would use if $\xi_t$ were observable at scoring time.

\paragraph{Assumption 2 --- relative staleness proxy.}
The effective-write displacement norm $\|\xi_t\|_2$ is unavailable before deciding which tokens to refresh. The deployed scorer replaces it with the accessible pre-output cached value norm under the working surrogate assumption that
\[
    \mathbb{E}\big[\|\xi_t\|_2^2 \,\big|\, V_t^c\big] \;\propto\; \|V_t^c\|_2^2 ,
\]
where $V_t^c$ is the pre-output cached value. We use its norm as a monotone ranking proxy for the post-output write displacement across contexts. The cached-norm stability analysis and end-to-end factor ablation in Appendix~\ref{app:diagnostics} test the operational role of this proxy. Substitution into the diagonal oracle gives the throughput score and surrogate used in Section~\ref{sec:method:framework},
\[
    s_t^{\mathrm{sur}} = \bar\alpha_t\,\|V_t^c\|_2 ,
    \quad
    \mathcal{D}(\mathcal{T}) := \sum_{t\notin\mathcal{T}} \big(s_t^{\mathrm{sur}}\big)^2 .
\]

\paragraph{Top-$k$ optimality for the ranking surrogate.}
The following lemma identifies the exact top-$k$ solution for $\mathcal{D}$ and connects the ranking signal to the refresh operation.
\begin{lemma}[Within-surrogate optimality of top-$k$]
\label{lem:rd-topk}
Under $|\mathcal{T}|=k$, every minimizer of $\mathcal{D}$ above is $\mathcal{T}^{\star}=\operatorname{TopK}_{t\in\mathcal{V}}(s_t^{\mathrm{sur}},k)$, with arbitrary tie-breaking.
\end{lemma}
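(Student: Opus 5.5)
The argument reduces the constrained minimization to a constrained maximization of a fixed additive quantity. Write $w_t := (s_t^{\mathrm{sur}})^2 \ge 0$ for $t\in\mathcal{V}$ and $W := \sum_{t\in\mathcal{V}} w_t$, which does not depend on $\mathcal{T}$. Then
\[
\mathcal{D}(\mathcal{T}) = W - \sum_{t\in\mathcal{T}} w_t ,
\]
so minimizing $\mathcal{D}$ over $|\mathcal{T}|=k$ is equivalent to maximizing $S(\mathcal{T}) := \sum_{t\in\mathcal{T}} w_t$ over the same family. Since $s_t^{\mathrm{sur}}=\bar\alpha_t\|V_t^c\|_2\ge 0$ and $x\mapsto x^2$ is strictly increasing on $[0,\infty)$, ordering tokens by $w_t$ is the same as ordering them by $s_t^{\mathrm{sur}}$, ties included. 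We can therefore work with $s^{\mathrm{sur}}$ directly.

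\textbf{Top-$k$ sets are optimal.} Let $\mathcal{T}^\star$ be any top-$k$ set, so that $s_u^{\mathrm{sur}}\ge s_v^{\mathrm{sur}}$ for every $u\in\mathcal{T}^\star$ and $v\notin\mathcal{T}^\star$. Take any feasible $\mathcal{T}$ with $|\mathcal{T}|=k$. Because $|\mathcal{T}\setminus\mathcal{T}^\star| = |\mathcal{T}^\star\setminus\mathcal{T}|$, we can pair these two difference sets by a bijection $\phi:\mathcal{T}\setminus\mathcal{T}^\star\to\mathcal{T}^\star\setminus\mathcal{T}$. Each $v\in\mathcal{T}\setminus\mathcal{T}^\star$ lies outside $\mathcal{T}^\star$, so $w_{\phi(v)}\ge w_v$. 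Summing over the pairs gives $S(\mathcal{T}^\star)\ge S(\mathcal{T})$, hence $\mathcal{D}(\mathcal{T}^\star)\le\mathcal{D}(\mathcal{T})$. The same bound holds for every tie-breaking choice of $\mathcal{T}^\star$.

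\textbf{Every minimizer is a top-$k$ set.} Suppose $\mathcal{T}$ is a minimizer that is not a top-$k$ set. Then there exist $v\in\mathcal{T}$ and $u\notin\mathcal{T}$ with $s_u^{\mathrm{sur}}>s_v^{\mathrm{sur}}$, and squaring preserves this, so $w_u>w_v$. Swapping them, $\mathcal{T}' = (\mathcal{T}\setminus\{v\})\cup\{u\}$, gives
\[
\mathcal{D}(\mathcal{T}') = \mathcal{D}(\mathcal{T}) - (w_u - w_v) < \mathcal{D}(\mathcal{T}),
\]
which contradicts minimality. Hence every minimizer is a top-$k$ set, and such sets differ only in how ties at the $k$-th score are broken; this is exactly the statement with arbitrary tie-breaking. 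As noted in Section~\ref{sec:method:pipeline}, the same argument applies verbatim to any fixed non-negative composite score $\hat s_t$.

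The only subtle points are the strict-versus-weak inequalities. The swap step needs the strict inequality $w_u>w_v$, and it is the non-negativity of the scores that makes squaring preserve strict order. Everything else is the routine exchange argument.
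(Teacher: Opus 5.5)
Your proof is correct and takes the same route as the paper's. The paper rewrites $\mathcal{D}(\mathcal{T})$ as the constant $\sum_{t\in\mathcal{V}}(s_t^{\mathrm{sur}})^2$ minus $\sum_{t\in\mathcal{T}}(s_t^{\mathrm{sur}})^2$ and simply asserts that a top-$k$ set maximizes the latter, with ties giving equal value; your exchange arguments fill in what that one-line proof leaves implicit, namely the converse that every minimizer is a top-$k$ set and the use of $s_t^{\mathrm{sur}}\ge 0$ to carry the ordering over from the squared weights to $s_t^{\mathrm{sur}}$ itself.
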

\begin{proof}
Write $\mathcal{D}(\mathcal{T})=\sum_{t\in\mathcal{V}}(s_t^{\mathrm{sur}})^2-\sum_{t\in\mathcal{T}}(s_t^{\mathrm{sur}})^2$. The first term is constant, so the top-$k$ set maximizes the second; ties give the same objective value.
\end{proof}
Residual-error behavior is evaluated empirically in Appendix~\ref{app:diagnostics}.

\paragraph{Connection to norm-weighted attention.}
\citet{kobayashi2020attention} measure a token's attention contribution as $\alpha_t\|f(x_t)\|_2$, where $f$ includes the value and attention-output transformations. \textsc{Conduit} adapts this prior signal to stale-cache selection through the accessible pre-output cached value norm, and Lemma~\ref{lem:rd-topk} connects fixed non-negative rankings to the global top-$k$ refresh operation.

\paragraph{Empirical validation.}
The diagnostics in Section~\ref{sec:exp:analyses} evaluate the surrogate ranking on real prompts. The controlled projected-error setting in Table~\ref{tab:rd-tightness} isolates the cached-key attention signal, while the factor ablation in Table~\ref{tab:ablations} measures the end-to-end value-norm contribution.

\subsection{Empirical Image-Level Relevance Amplification}
\label{app:lagrangian-cj}

\paragraph{Design and normalization.}
The token-level surrogate already ranks candidates globally across images. We add a separate empirical image-level relevance amplification to reduce the influence of isolated high-scoring patches from images that the current query reads weakly. Using the image-mean cached-key attention $g_j$ from Equation~\ref{eq:image-mean}, the coefficient is
\[
    c_j=(1-\lambda)+\lambda\,\frac{g_j}{\bar g}.
\]
The unweighted cross-image mean satisfies $K^{-1}\sum_j c_j=1$, which normalizes the mean coefficient across images to one. With $\lambda=0$, every image uses $c_j=1$; with $\lambda=1$, above-average $g_j$ amplifies all token scores from image $j$ and below-average $g_j$ attenuates them. Global top-$k$ still enforces the exact total budget, and no hard image quota is imposed.

\paragraph{Empirical support.}
The factor ablation in Table~\ref{tab:ablations} shows a consistent end-to-end gain from including $c_j$, and Table~\ref{tab:lambda-sweep} shows limited variation among the three higher settings $\lambda\in\{0.5,0.75,1.0\}$. Table~\ref{tab:cj-defense} further characterizes its behavior relative to explicit reweighting and unweighted Global TopK.

\paragraph{Single-image reduction.}
If $K=1$, then $g_1=\bar g$ and $c_1=1$ for every $\lambda$. The composite score in Equation~\ref{eq:topr} becomes $\hat s_t=s_t$, and the multi-image branch reduces exactly to the intra-image throughput selector.

\section{Diagnostics and Analyses}
\label{app:estimators}

This section provides the estimator details and prompt-level protocol behind the diagnostics summarized in Section~\ref{sec:exp:analyses}. The floated summaries first report projected-error rankings, token-set overlap, and cross-image reweighting. Table~\ref{tab:evidence-quartiles} gives the evidence-stratified results, while Figure~\ref{fig:case-longdocurl} and Table~\ref{tab:case-longdocurl} give the qualitative example discussed in Section~\ref{app:case-study}; the subsections below provide their protocols and interpretation.

\begin{table*}[!t]
    \centering
    \small
    \setlength{\tabcolsep}{6pt}
    \renewcommand{\arraystretch}{1.12}
    \begin{tabular}{l c c c c c @{\hskip 1.4em} c c c c c}
        \toprule
        & \multicolumn{5}{c}{Reduction vs.\ random} & \multicolumn{5}{c}{Reduction vs.\ $\|V\|$-only} \\
        \cmidrule(lr){2-6}\cmidrule(l){7-11}
        Rule & $r=.05$ & $.10$ & $.20$ & $.30$ & $.50$ & $r=.05$ & $.10$ & $.20$ & $.30$ & $.50$ \\
        \midrule
        $\mathrm{TopK}(\bar\alpha)$
            & 5.44 & 11.55 & 35.68 & 94.35 & 530.13
            & 5.40 & 10.96 & 33.75 & 83.97 & 471.39 \\
        $\mathrm{TopK}(\|V\|)$
            & 1.01 & 1.05 & 1.06 & 1.12 & 1.12
            & -- & -- & -- & -- & -- \\
        $\mathrm{TopK}(\bar\alpha\cdot\|V\|)$
            & 5.38 & 11.46 & 35.46 & 94.09 & 535.96
            & 5.34 & 10.87 & 33.53 & 83.74 & 476.57 \\
        \bottomrule
    \end{tabular}
    \caption{\textbf{Projected residual-error reduction for selection rules.} Each entry is the ratio $\mathcal{E}(\text{baseline})/\mathcal{E}(\text{rule})$ on $16$ LongDocURL prompts (Qwen2.5-VL-3B, layer 24); the baseline is random selection (left block) or $\mathrm{TopK}(\|V\|)$ (right block), and higher is better. Dashes mark self-comparison cells, not missing runs.}
    \label{tab:rd-tightness}
\end{table*}

\begin{table*}[!t]
    \centering
    \small
    \begin{minipage}[t]{0.46\textwidth}
        \centering
        \setlength{\tabcolsep}{4pt}
        \renewcommand{\arraystretch}{1.1}
        \begin{tabular}{l c c c}
            \toprule
            Layer & $J_{\alpha,K}$ & $J_{\alpha,V^{-}}$ & $J_{K,V^{-}}$ \\
            \midrule
            7 (shallow) & 0.009 & 0.039 & 0.039 \\
            14 (mid)    & 0.011 & \textbf{0.062} & 0.021 \\
            24 (deep)   & 0.009 & 0.055 & 0.016 \\
            \midrule
            Random deciles & 0.053 & 0.053 & 0.053 \\
            \bottomrule
        \end{tabular}
        \caption{\textbf{Overlap among token-index sets.} Pairwise Jaccard overlaps over $59$ image segments from Qwen2.5-VL-3B captures; each set contains the token indices selected by one scalar criterion, and $V^{-}$ is the bottom decile of value norm. Values near the random-decile baseline ($J^{\star}\approx0.053$) are consistent with independence, whereas substantially smaller values indicate anti-overlap.}
        \label{tab:kv-sink}
    \end{minipage}\hfill
    \begin{minipage}[t]{0.46\textwidth}
        \centering
        \setlength{\tabcolsep}{4pt}
        \renewcommand{\arraystretch}{1.1}
        \begin{tabular}{l c c c}
            \toprule
            Staleness & Actual & Explicit & Global \\
            \midrule
            Isotropic, $\kappa=0.1$ & 0.902 & 0.894 & \textbf{0.872} \\
            Isotropic, $\kappa=0.5$ & 0.902 & 0.893 & \textbf{0.872} \\
            Isotropic, $\kappa=1.0$ & 0.905 & 0.899 & \textbf{0.875} \\
            Two-query & 0.888 & 0.894 & \textbf{0.877} \\
            \bottomrule
        \end{tabular}
        \caption{\textbf{Cross-image reweighting diagnostic.} Values are $\mathcal{E}(\mathcal{T})/\mathcal{E}_{\mathrm{uniform}}$ at $r=0.10$ on $16$ multi-image Qwen2.5-VL-3B prompts; lower is better, and $\mathcal{E}_{\mathrm{uniform}}$ is the error of uniform random refresh at the same budget under the same sampled displacement. \emph{Actual} is the deployed composite score, \emph{Explicit} uses quotas proportional to $c_jn_j$, and \emph{Global} ranks by $s_t$ without image reweighting.}
        \label{tab:cj-defense}
    \end{minipage}
\end{table*}

\begin{table}[t]
    \centering
    \small
    \setlength{\tabcolsep}{8pt}
    \renewcommand{\arraystretch}{1.08}
    \begin{tabular}{@{}lcc@{}}
        \toprule
        \textbf{Gap quartile} & \textbf{Accuracy} & \textbf{Evidence-page budget} \\
        & \textbf{Ours / full} & \textbf{Ours / raw-attn} \\
        \midrule
        Q1 (near tie) & 46.2 / 46.7 & 26.1\% / 28.5\% \\
        Q2 & 36.5 / 39.9 & 36.1\% / 34.4\% \\
        Q3 & 42.5 / 43.3 & 49.7\% / 43.7\% \\
        Q4 & 30.3 / 32.9 & 76.9\% / 69.1\% \\
        \bottomrule
    \end{tabular}
    \caption{\textbf{Accuracy and evidence-page budget by attention-gap quartile} on the MMLongBench-Doc evidence-annotated subset (Qwen2.5-VL-3B, $r=0.10$). Metrics are computed within each quartile.}
    \label{tab:evidence-quartiles}
\end{table}

\subsection{Monte Carlo Staleness Estimator}
\label{app:mc-bias}

The cross-image diagnostic in Table~\ref{tab:cj-defense} requires a per-token displacement $\xi_t$. Since $\xi_t$ is not available when the selector is run, we use two complementary estimators.

\paragraph{Isotropic Monte Carlo.}
For each token, sample
\[
    \xi_t^{(m)}
    \sim
    \mathcal{N}
    \left(
        0,\,
        \frac{\kappa^2\|V_t\|_2^2}{d}I
    \right),
    \quad
    m=1,\ldots,M,
\]
where $d$ is the value-vector dimensionality and $\kappa$ is a diagnostic scale parameter for the value-norm staleness proxy of Section~\ref{app:surrogate}. Then $\mathbb{E}[\xi_t^{(m)}]=0$ and $\mathbb{E}[\|\xi_t^{(m)}\|_2^2]=\kappa^2\|V_t\|_2^2$. Because the reported error ratio uses a uniform-refresh error generated with the same global scale, $\kappa^2$ cancels in expectation. The three $\kappa$ rows in Table~\ref{tab:cj-defense} are repeated Monte Carlo checks, and their small differences reflect sampling variation. For a fixed refresh set, the estimator
\[
    \hat{\mathcal{E}}^{\mathrm{iso}}(\mathcal{T})
    =
    \frac{1}{M}
    \sum_{m=1}^{M}
    \left\|
        \sum_{t\notin\mathcal{T}}
        \alpha_{q,t}\xi_t^{(m)}
    \right\|_2^2
\]
is unbiased for the isotropic residual-error functional and has Monte Carlo variance $O(M^{-1})$. We use $M=5$ paired direction samples per token and compare every selection rule under the same sampled staleness model.

\paragraph{Two-query proxy.}
The two-query row uses the pre-output diagnostic displacement $\delta_t=V_t^B-V_t^A$, where $V_t^A$ and $V_t^B$ are value vectors produced by two text queries over the same visual content. The notation $\delta_t$ distinguishes this diagnostic quantity from the post-$W^O$ effective-write displacement $\xi_t$ in the local model. It measures prompt-specific pre-output value drift while holding visual content fixed.

\begin{figure*}[!t]
    \centering
    \begin{minipage}[t]{0.235\textwidth}
        \centering
        \includegraphics[width=\linewidth,height=0.16\textheight,keepaspectratio]{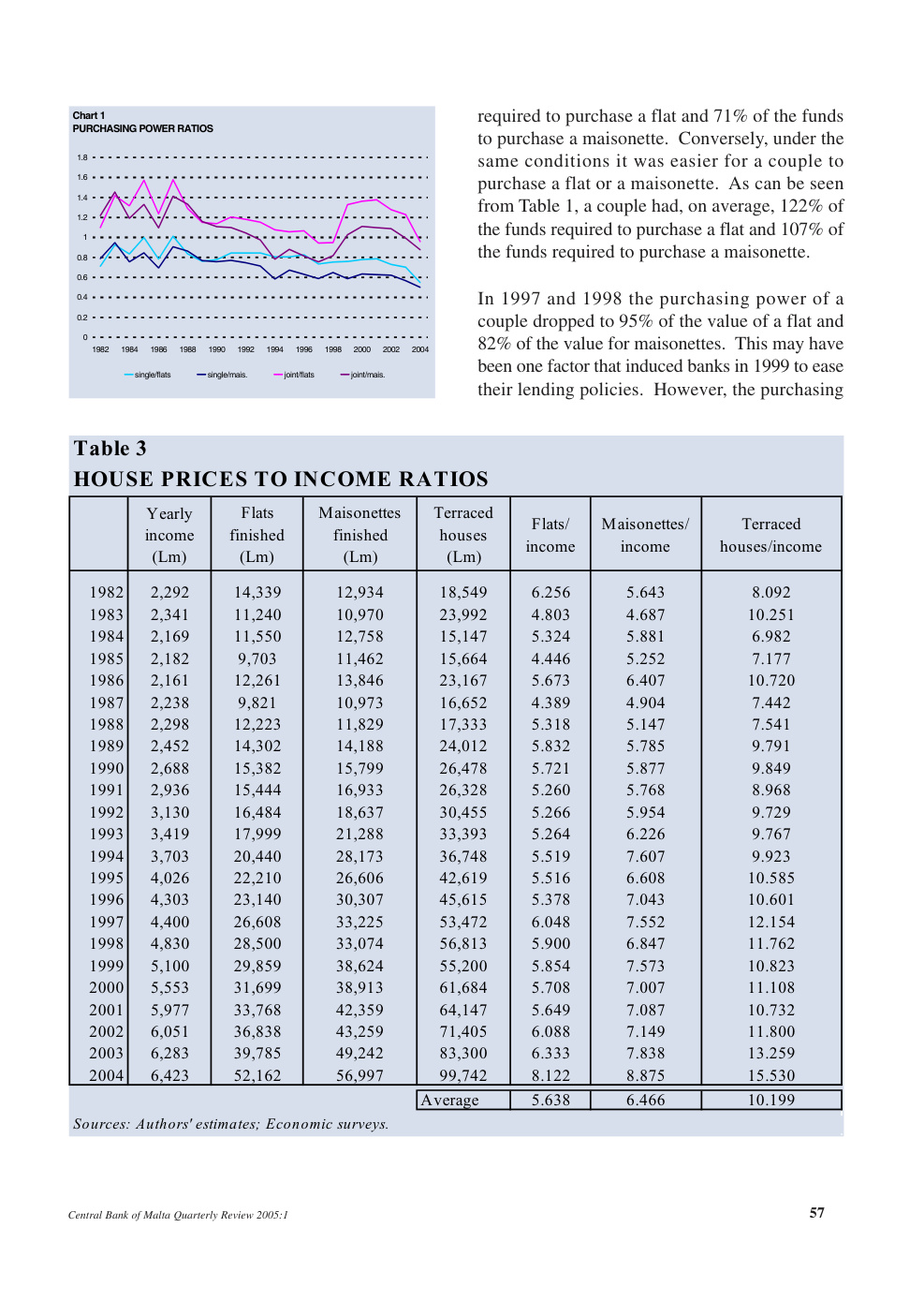}\\[-0.2em]
        {\scriptsize Context page 1}
    \end{minipage}\hfill%
    \begin{minipage}[t]{0.235\textwidth}
        \centering
        \includegraphics[width=\linewidth,height=0.16\textheight,keepaspectratio]{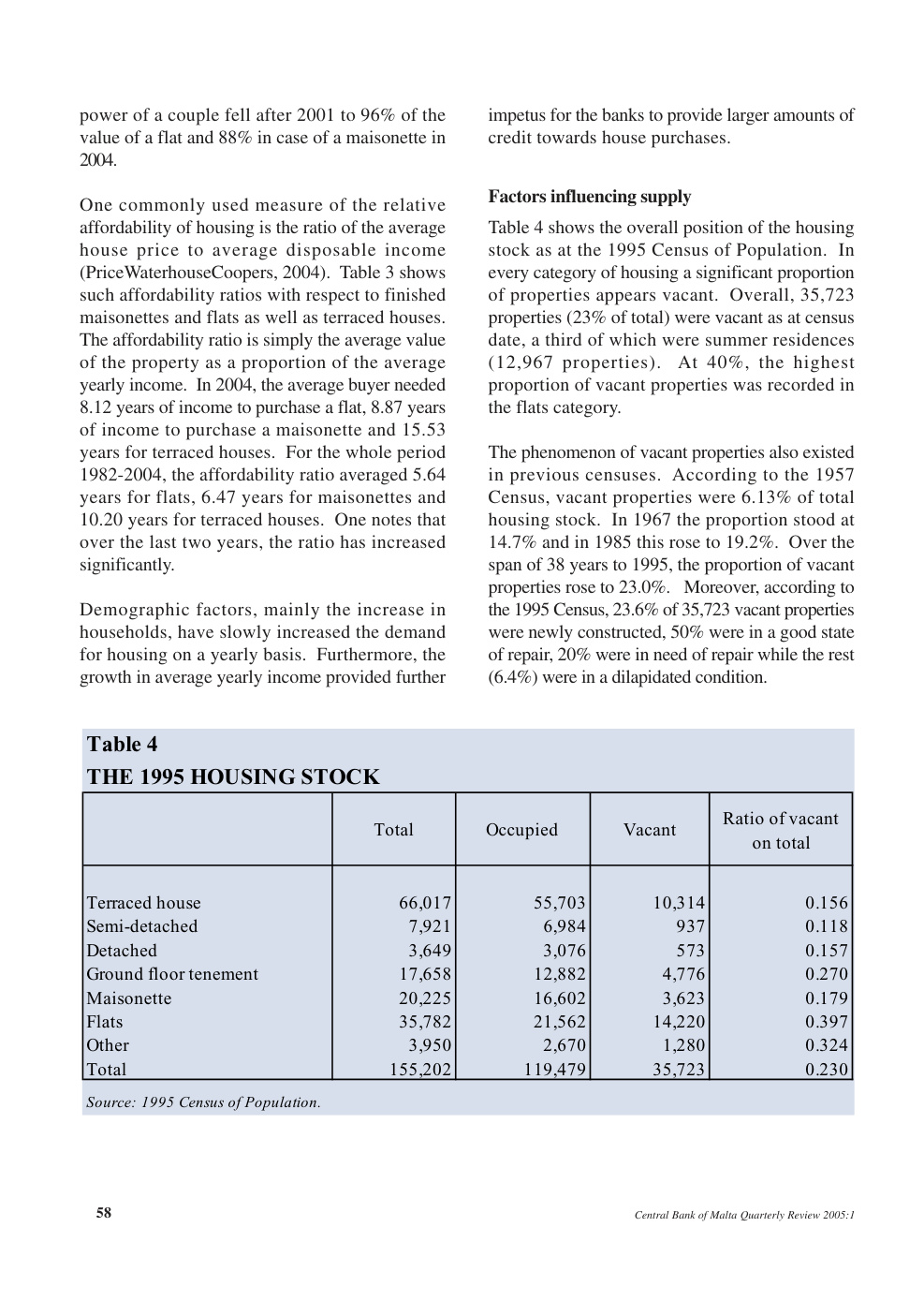}\\[-0.2em]
        {\scriptsize Context page 2}
    \end{minipage}\hfill%
    \begin{minipage}[t]{0.235\textwidth}
        \centering
        \includegraphics[width=\linewidth,height=0.16\textheight,keepaspectratio]{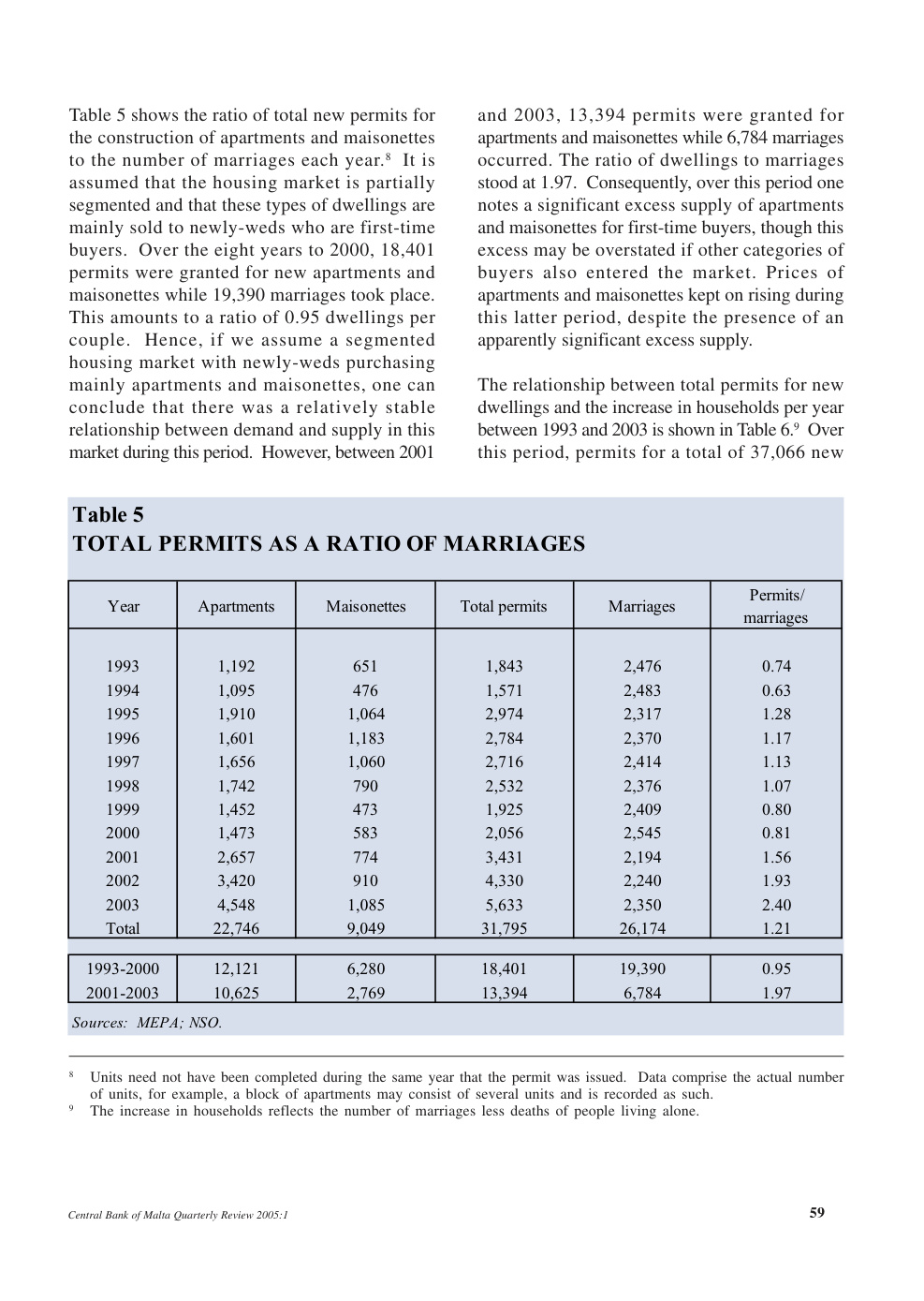}\\[-0.2em]
        {\scriptsize Context page 3}
    \end{minipage}\hfill%
    \begin{minipage}[t]{0.235\textwidth}
        \centering
        \includegraphics[width=\linewidth,height=0.16\textheight,keepaspectratio]{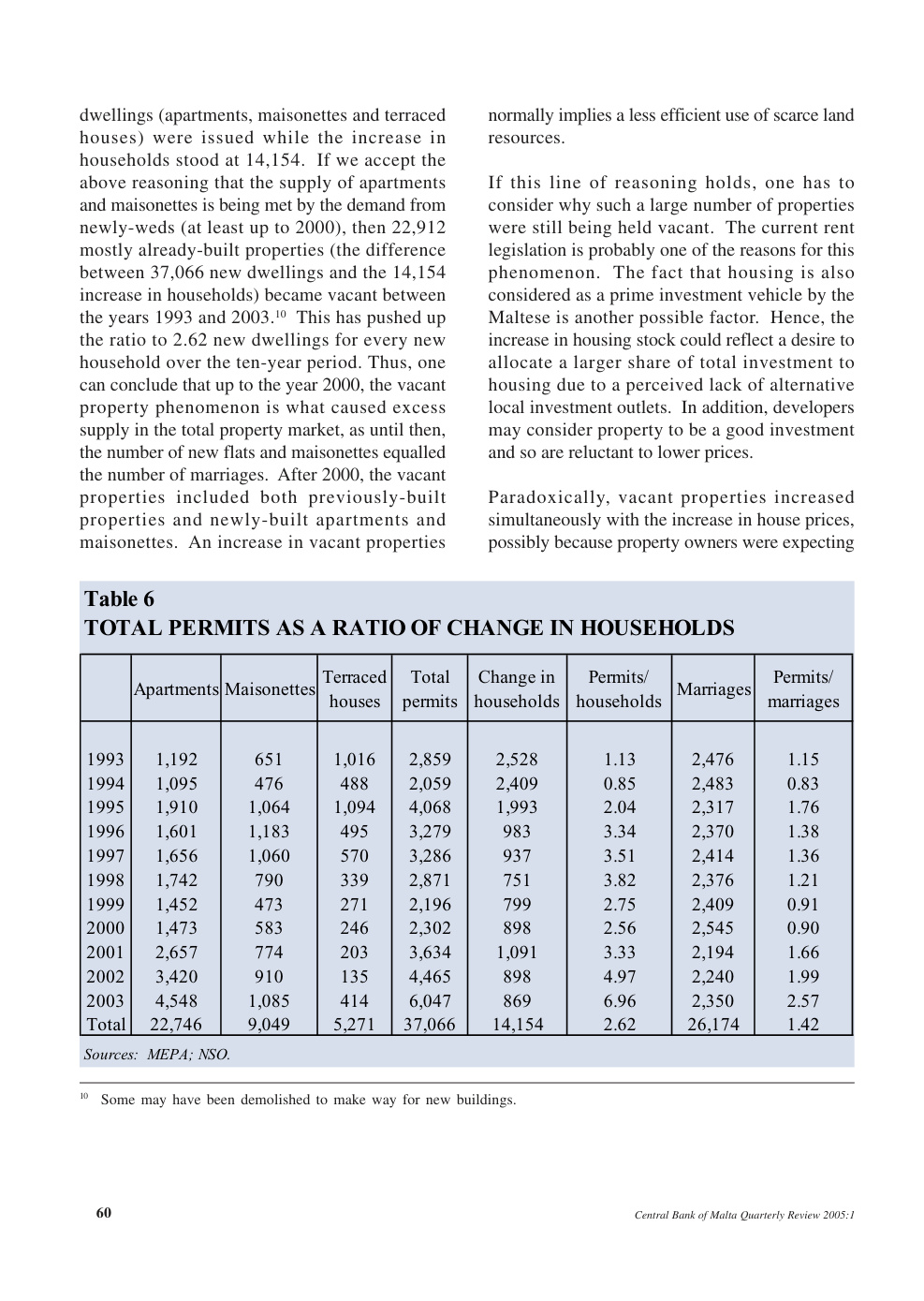}\\[-0.2em]
        {\scriptsize Context page 4}
    \end{minipage}
    \par
    \caption{\textbf{LongDocURL qualitative example.} The prompt contains four document pages and asks for the ratio associated with the period 2001--2003 and the row for marriages. Correct answering requires binding the question to the target table tuple within the document context, rather than selecting a salient nearby number.}
    \label{fig:case-longdocurl}

    \vspace{0.15em}
    \centering
    \small
    \setlength{\tabcolsep}{6pt}
    \renewcommand{\arraystretch}{1.08}
    \begin{tabular}{@{}lcp{0.64\textwidth}@{}}
        \toprule
        \textbf{Method} & \textbf{Answer} & \textbf{Qualitative behavior} \\
        \midrule
        Full prefill & 1.97 & Matches the ground-truth table entry. \\
        Cache reuse & 2.62 & Returns a distractor ratio from the same document context. \\
        CacheBlend & 2.62 & Returns the same distractor as zero-refresh reuse. \\
        MPIC & 2.62 & Returns the same distractor as zero-refresh reuse. \\
        KVShare & 2.62 & Returns the same distractor as zero-refresh reuse. \\
        ProphetKV & 0.95 & Returns a value corresponding to a different period. \\
        \textbf{\textsc{Conduit}} & \textbf{1.97} & Returns the same answer as full prefill. \\
        \bottomrule
    \end{tabular}
    \captionof{table}{\textbf{Model outputs for the LongDocURL case study.} All budgeted methods use $r=0.10$ with Qwen2.5-VL-3B-Instruct. The table reports exact generated answers and summarizes their relation to the target table entry for the same query.}
    \label{tab:case-longdocurl}
\end{figure*}

\subsection{Analysis Protocol}
\label{app:analyses}

The three diagnostics share a common capture protocol. We sample $16$ multi-image prompts from the LongDocURL development set, with a mean of $3.7$ images per prompt, and run them through Qwen2.5-VL-3B. We record per-layer attention $\alpha_{q,t}$ and value vectors $V_t$ at layers 7, 14, and 24. Tables~\ref{tab:rd-tightness} and~\ref{tab:kv-sink} use the same captures; Table~\ref{tab:cj-defense} uses the isotropic staleness estimator with $M=5$ samples per token and the two-query proxy above. The large-pool independence baseline in Table~\ref{tab:kv-sink} is the expected Jaccard overlap of two independent top-fraction-$p$ subsets of the same pool, $J^{\star}=p^2/(2p-p^2)=p/(2-p)$, giving $J^{\star}\approx0.053$ for the top decile ($p=0.1$).

Three extensions use larger samples. The surrogate-ranking correlation is measured on 64 prompts. The scale-similarity diagnostic uses 64 document pages, 64 slides, and 64 natural images, comparing the within-image squared-throughput quantile profiles before and after normalization by $g_j^2$. Cached-norm stability uses 64 LongDocURL shifted-prefix pairs and computes tokenwise Spearman correlation between value-norm rankings under the cached and serving contexts at every layer.

\subsection{Diagnostic Results}
\label{app:diagnostics}

\paragraph{Surrogate ranking.}
Table~\ref{tab:rd-tightness} compares selection rules by their reduction in projected residual error $\mathcal{E}(\mathcal{T})$ on Qwen2.5-VL-3B at layer 24. For this diagnostic, we set the stale value of every non-refreshed visual token to $V_t^c=0$ and measure the layer-output residual error at the final text-token query. Across refresh ratios $r\in\{0.05,0.10,0.20,0.30,0.50\}$, rules using query attention reduce error by roughly $5\times$ to over $500\times$ relative to random selection, while value-norm-only selection remains close to random. Extending this diagnostic to 64 prompts, the Spearman correlation between the surrogate ordering and projected residual-error ordering ranges from $0.79$ to $0.89$ across budgets. This controlled setting supports the query-attention ranking, while the end-to-end ablation in Table~\ref{tab:ablations} measures the contribution of the value-norm factor in the deployed policy.

\paragraph{Cached-norm stability.}
Across 64 LongDocURL shifted-prefix pairs, the tokenwise value-norm ranking is highly stable across contexts. The layer-averaged Spearman correlation is 0.996 for Qwen2.5-VL-3B and 0.990 for Qwen2.5-VL-7B; the minimum over layers is 0.989 and 0.979, respectively. Together with the end-to-end drops of 0.89, 1.35, and 0.53 points when the value-norm factor is removed from the three backbones, these results support $\|V_t^c\|_2$ as a stable and useful cached ranking signal in the evaluated shifted-prefix setting.

\paragraph{Token-set overlap.}
Table~\ref{tab:kv-sink} compares token-index sets induced by the top decile of RMS attention $\bar\alpha$, the top decile of key norm $\|K\|$, and the bottom decile of value norm $\|V\|$. The attention--low-value overlap is close to the independent-decile baseline in the middle and deep layers, while several other pairs fall substantially below it, indicating anti-overlap. Overall, the criteria induce distinct token rankings with no systematic positive enrichment.

\paragraph{Cross-image reweighting.}
Table~\ref{tab:cj-defense} compares the empirical coefficient with two alternatives. \emph{Conduit-actual} ranks tokens by $\hat s_t=s_t c_{\pi(t)}$; \emph{Conduit-explicit} converts the same coefficient into quotas $k_j\propto c_jn_j$ before applying in-image top-$k$; and \emph{Global TopK} ranks by $s_t$ without image reweighting. The two coefficient-based variants remain within a few percent. The comparison characterizes cross-image reweighting behavior; the end-to-end ablation in Table~\ref{tab:ablations} supplies the task-level evidence for $c_j$.

\paragraph{Scale similarity.}
Normalizing each image's squared-throughput quantile profile by $g_j^2$ reduces the mean cross-type Kolmogorov--Smirnov distance from 0.66 to 0.24 across document pages, slides, and natural images (64 images per type; normalized distances range from 0.18 to 0.26 by type). This descriptive evidence shows that normalization makes the profiles more comparable across image types.

\subsection{Evidence Allocation, Failure Audit, and Case Study}
\label{app:failure-audit}
\label{app:case-study}

Table~\ref{tab:evidence-quartiles} stratifies the evidence-annotated subset of MMLongBench-Doc by the gap between the largest raw-attention score on an evidence page and the largest score on a non-evidence page. Q1 contains the closest page-level ties; in that ambiguous stratum, \textsc{Conduit} remains within 0.5 points of full prefill and allocates 26.1\% of the budget to evidence pages, compared with 28.5\% for raw-attention top-$k$. From Q2 through Q4, \textsc{Conduit} allocates progressively more evidence-page budget than raw-attention top-$k$ as the page-level gap widens.

In the audited evaluation outputs, we examine the 23 cases on MMLongBench-Doc and LongDocURL in which \textsc{Conduit} and full prefill produce different answers for Qwen2.5-VL-3B at $r=0.10$. In 15 cases, the answer-bearing page receives nontrivial page-level budget but at most 20\% of its visual tokens are refreshed; five of these cases recover when the budget increases to $r=0.20$. Four cases have near-tied evidence and distractor page scores, limiting the separation supplied by $c_j$. The remaining four show behavior qualitatively consistent with the unmodeled attention-drift term in Section~\ref{sec:method}. The audit therefore organizes the observed discrepancies into insufficient within-image coverage, page-level ambiguity, and a smaller attention-drift-consistent category.

Figure~\ref{fig:case-longdocurl} and Table~\ref{tab:case-longdocurl} give one LongDocURL example of how selective refresh changes the generated answer in a long visual prefix. The model is Qwen2.5-VL-3B-Instruct, and all budgeted methods use the same refresh ratio $r=0.10$. The query asks for the ratio of permits for new apartments and maisonettes to the number of marriages between 2001 and 2003. Answering requires locating the relevant table entry among four document pages, rather than reading the most visually salient numeric cell. Cache reuse, CacheBlend, MPIC, and KVShare all return 2.62, a distractor ratio from the same document context, while ProphetKV returns 0.95, corresponding to a different period. \textsc{Conduit} returns 1.97 and matches full prefill in this example. The example records the observed output differences under the norm-aware, image-reweighted selection policy.

\FloatBarrier
\section{Forward-Looking Extensions}
\label{app:extensions}

The evaluated protocol covers two-request reuse of static images and document pages. Extending the restoration view to video, streaming observations, or long-horizon agents would require additional mechanisms and validation: the image index could become a frame or segment index, the coefficient could incorporate temporal drift, and the static value-norm proxy could be augmented by a learned or analytic displacement predictor. Future work will test these extensions in dynamic settings.

\end{document}